\def\eqref#1{equation~\ref{#1}}
\def\1{\bm{1}}
\def\eps{{\epsilon}}
\DeclareMathAlphabet{\mathsfit}{\encodingdefault}{\sfdefault}{m}{sl}
\SetMathAlphabet{\mathsfit}{bold}{\encodingdefault}{\sfdefault}{bx}{n}
\newcommand{\E}{\mathbb{E}}
\DeclareMathOperator*{\argmax}{arg\,max}
\def\vphi{{\bm{\phi}}}
\def\phiParam{{\theta_2}}
\def\psiParam{{\theta_1}}
\def\vParam{{\theta}}
\newcommand{\Tau}{\mathrm{T}}
\def\stateParam{{\eta_3}}
\def\causalPhiParam{{\eta_2}}
\def\causalPsiParam{{\eta_1}}
\def\causalVParam{{\eta}}
\title{Value-driven Hindsight Modelling}
\author{Arthur Guez {} {} {} {} {}  Fabio Viola {} {} {} {} {} Th\'{e}ophane Weber {} {} {} {} {} Lars Buesing \vspace{0.2cm}\\
{\vspace{0.2cm} \bf Steven Kapturowski {} {} {} {} {} Doina Precup {} {} {} {} {} David Silver {} {} {} {} {} Nicolas Heess} \\
DeepMind \\
\texttt{aguez@google.com}}
\begin{document}

\maketitle

\begin{abstract}
Value estimation is a critical component of the reinforcement learning (RL) paradigm. The question of how to effectively learn value predictors  from data is one of the major problems studied by the RL community, and different approaches exploit structure in the problem domain in different ways. 
Model learning can make use of the rich transition structure present in sequences of observations, but this approach is usually not sensitive to the reward function.
In contrast, model-free methods directly leverage the quantity of interest from the future, but receive a potentially weak scalar signal (an estimate of the return). 
We develop an approach for representation learning in RL that sits in between these two extremes: we propose to learn what to model in a way that can directly help value prediction.
To this end, we determine which features of the \emph{future} trajectory provide useful information to predict the associated return. This provides tractable prediction targets that are directly relevant for a task, and can thus accelerate learning the value function. 
The idea can be understood as reasoning, in hindsight, about which aspects of the \textit{future} observations could help \textit{past} value prediction. 
We show how this can help dramatically even in simple policy evaluation settings. We then test our approach at scale in challenging domains, including on 57 Atari 2600 games.
\end{abstract}

\section{Introduction}

Consider a baseball player trying to perfect their pitch. The player performs an arm motion and releases the ball towards the batter, but suppose that instead of observing where the ball lands and the reaction of the batter, the player only hears the result of the play in terms of points or, worse, only the final result of the game. Improving their pitch from this experience appears hard and inefficient, yet this is essentially the paradigm we employ when optimizing policies in model-free reinforcement learning. The scalar feedback that estimates the return from a state (and action), encoding \textit{how well things went}, drives the learning while the accompanying observations that may explain that result (e.g., flight path of the ball or the way the batter anticipated and struck the incoming baseball) are ignored. 
To intuitively understand how such information could help value prediction, consider a simple discrete Markov chain $X \rightarrow Y \rightarrow Z$, where $Z$ is the scalar return and $X$ is the observation from which we are trying to predict $Z$. If the space of possible values of $Y$ is smaller than $X$, then it may be more efficient to estimate $P(Y | X)$ and $P(Z | Y)$ rather than directly estimating $P(Z | X)$.\footnote{See appendix \ref{sec:appendix_introduction_example} for a worked out example.} In other words, observing and then predicting $Y$ can be advantageous compared to directly estimating the signal of interest $Z$. 
Model-based RL approaches would duly exploit the observed $Y$ (by modelling $P(Y | X)$), but $Y$ would, in general scenarios, contain information that is irrelevant to $Z$ and hard to predict. 
Building a full high-dimensional predictive model to indiscriminately estimate all possible future observations, including potentially chaotic details of the ball trajectory and the spectators' response, is a challenge that may not pay off if the task-relevant predictions (e.g., was the throw accepted, was the batter surprised) are error-ridden. 
Model-free RL methods directly focus only on the relation of $X$ to $Z$, rather than attempting to learn the full dynamics. These methods have recently dominated the literature, and have attained the best performance in a wide array of complex problems with high-dimensional observations~\citep{mnih2015dqn, schulman2017ppo, haarnoja2018soft, guez2019drc}.

In this paper, we propose to augment model-free methods with a lightweight model of future quantities of interest. The motivation is to model only those parts of the future observations ($Y$) that are needed to obtain better value predictions. The major research challenge is to learn, from observational data, which aspects of the future are important to model (i.e.\ what $Y$ should be).
To this end, we propose to learn a special value function \emph{in hindsight} that receives future observations as an additional input. 
This learning process reveals features of the future observations that would be most useful for value prediction (e.g., flight path of the ball or reaction of the batter), if provided by an oracle. 
We then learn a model that predicts these hindsight features using only information available at test time
(at the time of releasing the ball, we knew the identity of the batter, the type of throw and spin of the ball). Learning these value-relevant features can help representation learning for an agent and provide an additional useful input to its value and policy functions. 
Experimentally, we show that hindsight value functions surpassed model-free RL methods in a challenging association task (Portal Choice). When added to the prior state-of-the-art model-free RL method for Atari games~\cite{kapturowski2018r2d2}, hindsight value functions significantly increased median performance, from 833\% to 965\%. 

\section{Background and Notation}

Consider a reinforcement learning (RL) agent interacting in a sequential decision-making environment \citep{sutton2011reinforcement}.
At each step $t$, after observing state $s_t$,\footnote{In practice the environment is often partially-observed and the state of the world is not directly accessible. For this case, we can replace the observed state $s$ by a learned function that depends on past observations.} the agent outputs an action $a_t\sim\pi(A | s_t)$, and obtains a scalar reward $R_t$ and the next-state $s_{t+1}$ from the environment.
The sum of discounted rewards from $s$ is the return denoted by $G = \sum_{t=0}^\infty \gamma^t R_{t}$, with $\gamma \in (0,1)$. Its expectation is called the value function, $v^\pi(s) = \E_{\pi} [G | S_0 = s]$. An important related quantity is the action-value, or Q-value, which corresponds to the same expectation with a particular action executed first: $q^\pi(s,a) = \E_{\pi} [G | S_0 = s, A_0 = a]$.
The agent's goal is to adapt the policy $\pi$ in order to achieve a higher value $v^\pi$.  
This usually entails learning an estimate of $v^\pi$ for the current policy $\pi$, the problem we focus on in this paper. From now on, we drop $\pi$ from the notation for simplicity.

{\bf Direct (Model-free) Learning}\qquad
A common approach to estimate $v$ (or $q$) is to represent it as a parametric function $v_\theta$ (or $q_\theta$) and directly update its parameters based on sample returns of the policy of interest. 
Value-based RL algorithms vary in how they construct a value target $U$ from a single trajectory. They may regress $v_\theta$ towards the Monte-Carlo (MC) return ($U_t=G_t$), or exploit sequentiality by relying on a form of temporal-difference learning to reduce variance (e.g., the TD(0) target $U_t = R_t + \gamma v_\theta(S_{t+1})$). 
For a given target definition $U$, the value loss $\mathcal{L}_v$ to derive an update for $\theta$ is: $\mathcal{L}_v(\theta) = \frac{1}{2} \E_{s}[(v_\theta(s) - U)^2]$.
In constructing a target $U_t$ based on a trajectory of observations and rewards from time $t$, the observations are either unused (for a MC return) or only indirectly exploited (when bootstrapping to obtain $U$, see Sec.~\ref{sec:bootstrapping}). In all cases, the trajectory is distilled into a scalar signal that estimates the return, and other relevant aspects of future observations are discarded. In domains with high-dimensional observation spaces or partial observability, it can be particularly difficult to discover correlations with this noisy, possibly sparse, signal.

{\bf Model-based Learning}\qquad An indirect way to estimate values is to first learn a model of the dynamics. For example a 1-step observation model $m_\theta$ learns to predict the conditional distribution $s_{t+1}, r_t | s_t, a_t$. Then a value estimate $v(s)$ for state $s$ can be obtained by autoregressively rolling out the model (until the end of the episode or to a fixed depth with a parametric value bootstrap).
The model is trained on potentially much richer data than the return signal, since it uses all information in the trajectory. Indeed, the observed transitions between states can reveal the structure behind a sparse reward signal. 
A drawback of classic model-based approaches is that they predict a high-dimensional signal, a task which may be costly and harder than directly predicting values. As a result, the approximation of the dynamics $m_\theta$ may contain errors where it matters most for predicting values~\citep{talvitie2014model}. Although the observations carry all the data from the environment, most of it is not essential for the task~\citep{gelada2019deepmdp}. The concern that modelling all observations is expensive also applies when the model is not used for actual rollouts but merely for representation learning.

\section{Hindsight Value Functions}

While classic model-based methods fully use the high-dimensional observations at some cost, model-free methods focus only on the most relevant low-dimensional signal (the scalar return). 
We propose a method that strikes a balance between these paradigms.

\subsection{Hindsight value and model}

We first introduce a new quantity, the \textit{hindsight} value function $v^+$, which depends on quantities only available at training time.
This value still represents the expected return from a state $s_t$, but it is  conditioned on additional information $\tau^+_t \in \mathbb{T}^+$ occuring after time $t$: $v^+(s_t, \tau^+_t) = \E[G | S_0=s_t, \Tau^+=\tau^+_t]$; $\tau^+_t$ can be defined to include any of the future observations, actions and rewards occurring in the trajectory following $s_t$. 
The hindsight value can be seen as an instance of a general value function in a stochastic computational graph as defined by \citet{weber2019credit}.

In this paper, we focus on the case of $\tau^+$ containing $k$ additional observations $\tau^+_t = s_{t+1}, s_{t+2}, \dots s_{t+k}$ occurring after time $t$: $v^+(s_t, \tau^+_t) = \E[G | S_0=s_t, \dots, S_k=s_{t+k}]$.
Furthermore, we require estimates of $v^+$ to follow the following parametric structure: $v^+(s_t, \tau^+_t; \vParam) = \psi_\psiParam(f(s_t), \vphi_\phiParam(\tau^+_t))$, where $\vParam = (\psiParam, \phiParam)$, which forces information about the future trajectory through some vector-valued function $\vphi \in \mathcal{R}^d$.
Intuitively, $v^+$ is estimating the expected return from a past time point using privileged access to future observations.
Note that if $k$ is large enough, then $v^+$ may simply estimate the empirical return from time $t$ given access to the state trajectory. However, as we will show, if $k$ is small and $\vphi$ is low-dimensional, $\vphi$ can become a bottleneck representation of the future trajectory $\tau^+_t$. Hence, by learning in hindsight, we identify features that are maximally useful to predict the return from time $t$.

The hindsight value function is not a useful quantity by itself, since we cannot readily use it at test time, because of its use of privileged future observations. It cannot be used as a baseline for policy gradient either, as it will yield a biased gradient estimator~\cite{weber2019credit}.
Instead, we propose to learn a model $\hat{\vphi}$ of $\vphi$, that can be used at test time. We conjecture that if privileged features $\vphi$ are useful for estimating the value, then the model of those features will also be useful (and at least a good signal for representation learning). 
We propose to learn the approximate expectation model $\hat{\vphi}_\causalPhiParam(s)$ conditioned on the current state $s$ and parametrized by $\eta_2$, by minimizing an appropriate loss $\mathcal{L}_\text{model}(\causalPhiParam)$ between $\vphi$ and $\hat{\vphi}$ (e.g., a squared loss).
The approximate model $\hat{\vphi}$ can then be leveraged to obtain a better model-based value estimate $v^m(s; \causalVParam) = \psi_\causalPsiParam(f(s), \hat\vphi_\causalPhiParam(s))$. 
Although $\hat\vphi(s)$ cannot contain more information than included already in the state $s$, it can still benefit from  training with a richer signal (i.e., privileged value-relevant features $\phi$) before the value converges.

This formulation can be straightforwardly extended to the case where observations are provided as inputs, instead of full states: a learned function then outputs a state representation $h_t$ based on the current observation $o_t$ and previous estimated state $h_{t-1}$ (details in Sec.~\ref{sec:arch}).
Fig.~\ref{fig:himo_diag} summarizes the relation between the different quantities in this \textit{hindsight modelling} (HiMo) approach.

\subsection{Illustrative example}
\label{sec:example}

\begin{figure}[htb]
\centering
\begin{subfigure}{.6\textwidth}
\includegraphics[width=0.99\textwidth]{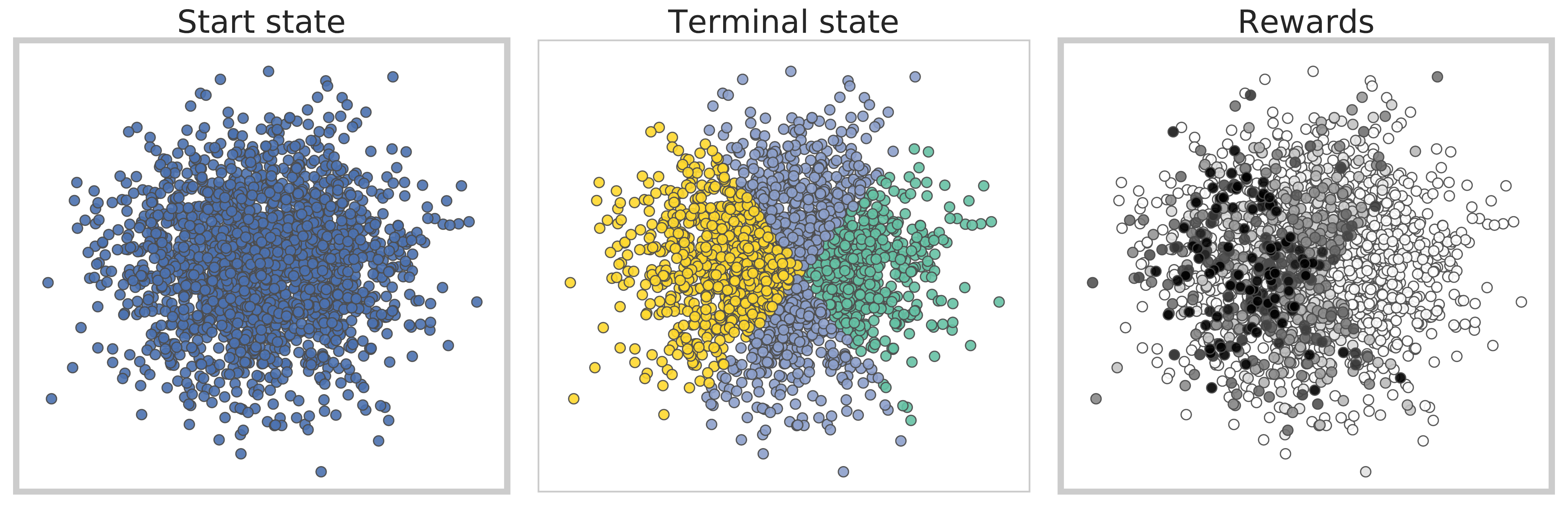}
\end{subfigure}
\begin{subfigure}{.35\textwidth}
\includegraphics[width=0.99\textwidth]{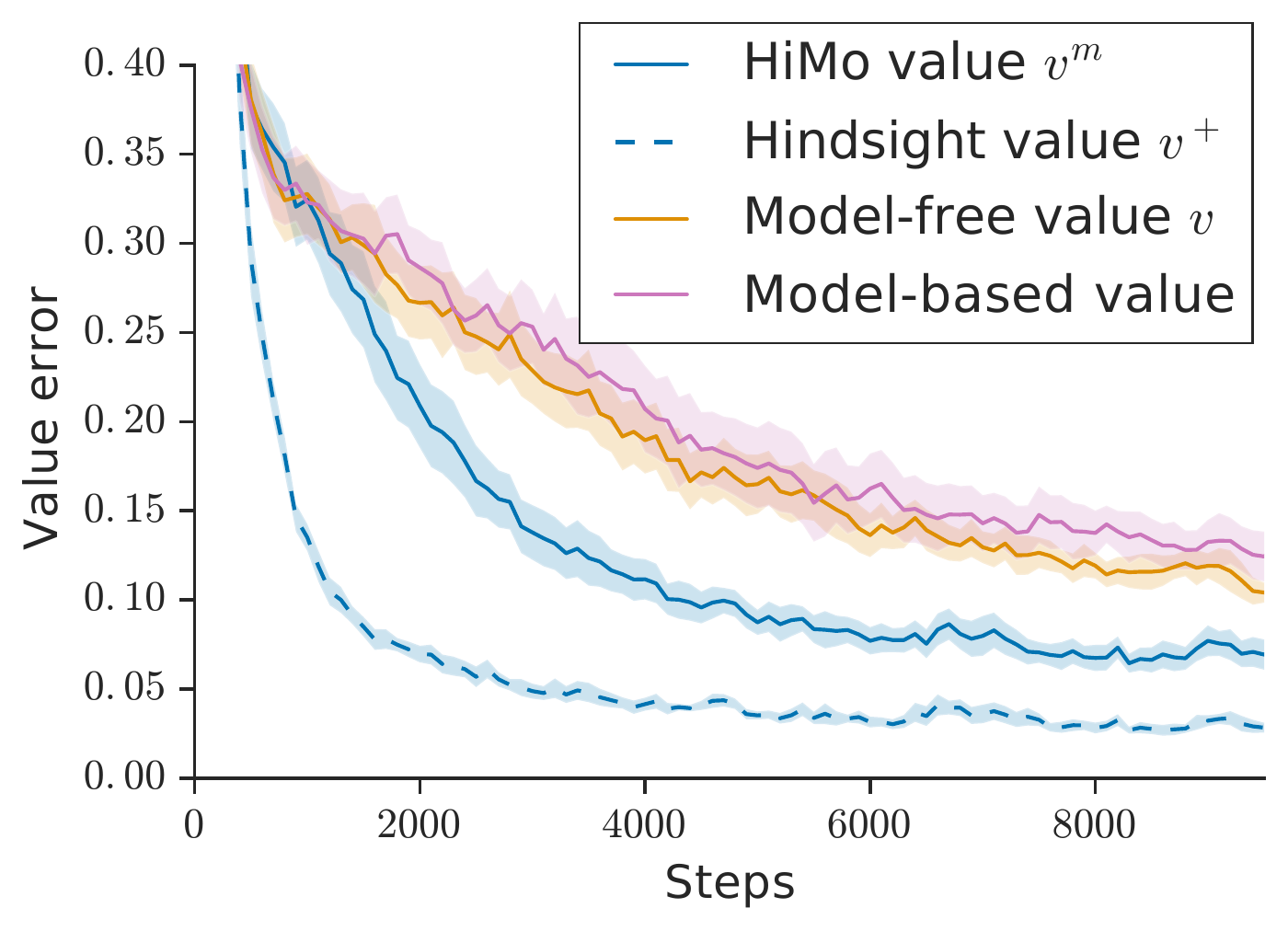}
\end{subfigure}
\caption{\textbf{Illustrative example of Sec.~\ref{sec:example} (Left)} Visualization of trajectories. Model-free value prediction see the start state $s$ on the left (which are normally distributed) and must predict the corresponding color-coded reward on the right (larger is darker). Hindsight value prediction can leverage the observed structure in the terminal state $s'$ (middle) to obtain a better value prediction; $s'$ is plotted superimposed on $s$ color-coded with the discrete reward-relevant quantity in $s'$.
%
%
States have dimension $D\!\!=\!\!4$ and are projected in 2D in this example.
\textbf{(Right)} Learning the value of the initial state for various methods. The data dimension is $D=32$ for this experiment, and the useful data dimension in $s'$ is $4$. The results are averaged over 4 different instances, each repeated twice. Note that $v^+$ (dotted line) is using privileged information (the next state), so it is a form of performance bound.
}
\label{fig:cab_exp}
\end{figure}

To understand how the approaches of estimating the value function differ and how value-driven hindsight modelling might be beneficial, consider the Markov Reward Process in Fig.~\ref{fig:cab_exp}.
%
Each episode consists of a single transition from initial state $s$ to terminal state $s'$, with a reward $r(s,s')$ on the way.
The key aspect of this domain is that observing $s'$ reveals structure that helps predict the value at $s$. Namely, part of $s'$ provides a simplified view of some of the information present in $s$, which is directly involved in predicting $r$ (in the baseball narrative, this could correspond to observing the final configuration between the ball and bat as they collide).  
Full details of the domain can be found in Appendix~\ref{sec:appendix_example}.

Let us consider how the different value learning approaches presented above fare in this problem. For direct learning, the value from $v(s')$ is $0$ since $s'$ is terminal, so any n-step return is identical to the MC return, that is, the information present in $s'$ is not leveraged. Results of learning $v$ from $s$ given the return are presented in Fig.~\ref{fig:cab_exp} (model-free, orange curve). 
A model-based approach first predicts $s'$ from $s$, then attempts to predict the value given $s$ and the estimated next state. When increasing the input dimension, given a fixed capacity, the model does not focus its attention on the reward-relevant structure in $s'$ and makes errors where it matters most. As a result, it can struggle to learn $v$ faster than a model-free estimate (pink curve in Fig.~\ref{fig:cab_exp}).
When learning in hindsight, $v^+$ can directly exploit the revealed structure in the observation of $\tau^+$ (this corresponds to terminal state $s'$ in this case), and as a result, the hindsight value (dashed blue curve) learns faster than the regular causal model-free estimate. This drives the learning of $\vphi$ and its model $\hat\vphi$, which directly gets trained to predict these useful features for the value. As a result, $v^m$ also benefits and learns faster than the regular $v$ estimate on this problem (blue curve).

\subsection{When is it advantageous to model in hindsight?}
\label{sec:analysis}

To understand the circumstances in which hindsight modelling provides a better value estimate, we provide the following result: 

\newtheorem*{prop}{Proposition}
\begin{prop} Suppose that $v^m_\vParam$ is sharing the same function $\psi$ as $v^+$ (i.e., $\psiParam=\causalPsiParam$), and let $\psi$ be linear ($\psi_\psiParam(f, \vphi) = \left(\begin{smallmatrix}\omega_1\\\omega_2\end{smallmatrix}\right)^\top \left(\begin{smallmatrix}f\\\vphi\end{smallmatrix}\right) + b$, where $\theta_1 = (\omega_1, \omega_2)$). Assume a squared loss for the model and the following relation between value losses: $\mathcal{L}(v^+) = C  \mathcal{L}(v)$ with $C<0.5$ (i.e., estimating the value with more information is an easier learning problem). Then, the following holds: $\mathcal{L}_\text{model}(\causalPhiParam) < \frac{(1-2C) \mathcal{L}(v)}{2 \|\omega_2 \|^2} \implies  \mathcal{L}(v^m) <  \mathcal{L}(v).$ (Proof in appendix.)
\end{prop}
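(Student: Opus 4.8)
The plan is to bound $\mathcal{L}(v^m)$ by telescoping the residual $v^m(s)-U$ (with $U$ the same value target used for all three value losses) into a \emph{model-error} piece and a \emph{hindsight-residual} piece, controlling the former through $\mathcal{L}_\text{model}$ and the latter through the hypothesis $\mathcal{L}(v^+)=C\,\mathcal{L}(v)$. Throughout I take all expectations over the same trajectory distribution, so that $\mathcal{L}(v)=\tfrac12\E[(v(s)-U)^2]$, $\mathcal{L}(v^+)=\tfrac12\E[(v^+(s,\tau^+_t)-U)^2]$, $\mathcal{L}(v^m)=\tfrac12\E[(v^m(s)-U)^2]$, and $\mathcal{L}_\text{model}(\causalPhiParam)=\tfrac12\E[\|\hat\vphi_\causalPhiParam(s)-\vphi_\phiParam(\tau^+_t)\|^2]$ (same $\tfrac12$ normalisation as the value loss). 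Since $\psiParam=\causalPsiParam$ and $\psi$ is linear with weights $(\omega_1,\omega_2)$ on $(f,\vphi)$, the shared part $\omega_1^\top f(s)+b$ cancels, leaving $v^m(s)-v^+(s,\tau^+_t)=\omega_2^\top\big(\hat\vphi_\causalPhiParam(s)-\vphi_\phiParam(\tau^+_t)\big)$; Cauchy--Schwarz gives the pointwise bound $(v^m-v^+)^2\le\|\omega_2\|^2\,\|\hat\vphi-\vphi\|^2$, hence $\E[(v^m-v^+)^2]\le 2\|\omega_2\|^2\,\mathcal{L}_\text{model}(\causalPhiParam)$.

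Next I would split $v^m(s)-U=\big(v^m(s)-v^+(s,\tau^+_t)\big)+\big(v^+(s,\tau^+_t)-U\big)$ and apply $(a+b)^2\le 2a^2+2b^2$ inside the expectation, giving
\[
2\,\mathcal{L}(v^m)=\E[(v^m-U)^2]\le 2\,\E[(v^m-v^+)^2]+2\,\E[(v^+-U)^2]\le 4\|\omega_2\|^2\,\mathcal{L}_\text{model}(\causalPhiParam)+4\,\mathcal{L}(v^+).
\]
Using $\mathcal{L}(v^+)=C\,\mathcal{L}(v)$ this becomes $\mathcal{L}(v^m)\le 2\|\omega_2\|^2\,\mathcal{L}_\text{model}(\causalPhiParam)+2C\,\mathcal{L}(v)$, and substituting the assumed bound $\mathcal{L}_\text{model}(\causalPhiParam)<(1-2C)\mathcal{L}(v)/(2\|\omega_2\|^2)$ yields $\mathcal{L}(v^m)<(1-2C)\mathcal{L}(v)+2C\,\mathcal{L}(v)=\mathcal{L}(v)$. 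The condition $C<1/2$ enters here: it is exactly what makes $1-2C>0$, so the hypothesised bound on $\mathcal{L}_\text{model}$ is a nonempty constraint and the final inequality is strict.

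The one genuine obstacle is the cross term $\E[(v^m-v^+)(v^+-U)]$ produced by the decomposition: it is not sign-definite, so it must be absorbed by a Young/Cauchy--Schwarz-type inequality rather than dropped. The symmetric choice $(a+b)^2\le 2a^2+2b^2$ (Young with parameter $1$) is what produces precisely the stated constant $(1-2C)/2$; optimising $(a+b)^2\le(1+\rho)a^2+(1+\rho^{-1})b^2$ over $\rho>0$ would give the slightly sharper threshold $(1-\sqrt C)^2\mathcal{L}(v)/\|\omega_2\|^2$ (note $(1-\sqrt C)^2\ge(1-2C)/2$ with equality at $C=1/4$), but the simple version suffices. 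The only other point to state carefully is bookkeeping: all four losses must be taken over the same sampling distribution and with the \emph{same} feature map $\vphi=\vphi_\phiParam$ appearing in both $v^+$ and (through $\hat\vphi$) in $v^m$, so that the residual decomposition is an exact identity before any inequality is invoked.
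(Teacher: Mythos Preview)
Your proof is correct and follows essentially the same route as the paper's: use linearity of $\psi$ plus Cauchy--Schwarz to bound $\E[(v^m-v^+)^2]$ by $\|\omega_2\|^2$ times the model loss, then apply $(a+b)^2\le 2a^2+2b^2$ to the decomposition $v^m-U=(v^m-v^+)+(v^+-U)$ and substitute the hypotheses. The only cosmetic difference is your $\tfrac12$ normalisation of all losses (the paper's appendix drops it), which cancels uniformly; your additional remark on optimising the Young parameter to obtain the sharper threshold $(1-\sqrt C)^2\mathcal{L}(v)/\|\omega_2\|^2$ is a nice observation that goes beyond the paper.
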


Intuitively, this relates how small the modelling error needs to be in order to guarantee that the value error for $v^m$ is smaller than the value error for the direct estimate $v$. 
The modeling error can be large for different reasons. If the environment or the policy is stochastic, then there is some irreducible modelling error for the deterministic model. Even in such cases, a small $C$ can make hindsight modelling advantageous.
The modeling error could also be high because predicting $\vphi$ is hard. For example, it could be that $\vphi$ essentially encodes the empirical return, which means predicting $\vphi$ is at least as hard as predicting the value function ($\mathcal{L}_\text{model}(\causalPhiParam) \geq \mathcal{L}(v)$).
Or, it could be that $\vphi$ is high-dimensional, causing both a hard prediction problem and a decrease in the acceptable threshold for $\mathcal{L}_\text{model}$ (since $\|\omega_2 \|^2$ will grow).
We address some of these concerns with specific architectural choices, such as allowing $v^+$ a limited view on future observations and having low dimensional $\vphi$ (see Sec.~\ref{sec:arch}).
The analysis above ignores any advantage obtained from representation learning when training $\hat\vphi$ (for shared state encoding functions).

\newmdenv[topline=false,leftline=false,bottomline=false]{rightborder}

\begin{figure}[htb]
\centering
\begin{subfigure}{.24\textwidth}
    \begin{minipage}{\textwidth}
        \resizebox{.99\textwidth}{!}{\setlength{\fboxrule}{0.0pt} 
            \fbox{\begin{tikzpicture}[every node/.style={scale=1.5,minimum height=0.5cm,minimum width=0.5cm},font=\large]
\usetikzlibrary{backgrounds,shapes}
\tikzstyle{sqr}=[rectangle, inner sep=1pt, minimum size = 6.5mm, thick, draw =blue!80, fill=blue!10, node distance = 20mm, scale=0.9]
\tikzstyle{stop}=[forbidden sign, minimum size=.3cm, line width=2, draw=red, fill=white] 

\tikzset{%
  bignode/.style     = {font=\fontsize{15}{18}\selectfont},
}

\newcommand{\yspace}{2.0}

\node [bignode] at (1,\yspace*1)(o_t){$ o_t$};
\node [bignode] at (5,\yspace*1)(o_tk){$o_{t+k}$};

\node [bignode] at (-.5,\yspace*2)(h_tm1){};
\node [bignode] at (1,\yspace*2)(h_t){$h_t$};
\node [bignode] at (5,\yspace*2)(h_tk){$h_{t+k}$};
\node [bignode] at (6.5,\yspace*2)(h_tkp1){};
\node [bignode] at (3,\yspace*2)(time){$\dots$};

\node [bignode] at (5,\yspace*4)(phi){$\phi_t$};

\node [bignode] at (1,\yspace*4)(phi_hat){$\Hat{\phi}_{t}$};

\node [bignode] at (1,\yspace*6)(v){$v^m_t$};
\node [bignode] at (5,\yspace*6)(v+){$v^+_t$};


\draw [-latex] (o_t) -- (h_t);
\draw [-latex] (h_t) -- (phi_hat);
\draw [-latex] (phi_hat) -- (v);

\draw [-latex] (h_tm1) -- (h_t);
\draw [-latex] (h_t) -- (time);
\draw [-latex] (time) -- (h_tk);
\draw [-latex] (h_tk) -- (h_tkp1);

\draw [-latex] (o_tk) -- (h_tk);
\draw [-latex] (h_tk) -- (phi);
\draw [-latex] (phi) -- (v+);
\draw [-latex] (h_t) -- (v+);
\draw [-latex] (h_t) to [out=130,in=230] (v);

\draw[ultra thick] (7,1.5) -- (7,14);

\end{tikzpicture}}
        }  
    \end{minipage}
    \caption{}
    \label{fig:himo_diag_net}
\end{subfigure}
\begin{subfigure}{.243\textwidth}
    \begin{minipage}{\textwidth}
      \centering
        \resizebox{.99\textwidth}{!}{\setlength{\fboxrule}{0.0pt} 
            \fbox{\begin{tikzpicture}[every node/.style={scale=1.5, minimum height=0.5cm,minimum width=0.5cm},,font=\large]

\tikzstyle{sqr}=[rectangle, inner sep=1pt, minimum size = 6.5mm, thick, draw =blue!80, fill=blue!10, node distance = 20mm, scale=0.9]
\tikzstyle{stop}=[forbidden sign, minimum size=.1cm, line width=2, draw=red!50, fill=white] 
\tikzset{%
  bignode/.style     = {font=\fontsize{15}{18}\selectfont},
}
\newcommand{\yspace}{2.0}

\node [bignode] at (1,\yspace*1)(o_t){$o_t$};
\node [bignode] at (5,\yspace*1)(o_tk){$o_{t+k}$};

\node [bignode] at (-.5,\yspace*2)(h_tm1){};
\node [bignode] at (1,\yspace*2)(h_t){$h_t$};
\node [bignode] at (5,\yspace*2)(h_tk){$h_{t+k}$};
\node [bignode] at (6.5,\yspace*2)(h_tkp1){};
\node [bignode] at (3,\yspace*2)(time){$\dots$};

\node [bignode] at (5,\yspace*4)(phi){$\phi_t$};

\node [bignode] at (1,\yspace*4)(phi_hat){$\Hat{\phi}_{t}$};
\node[stop,scale=1] at (1,\yspace*4.45){};
\node [bignode] at (1,\yspace*5)(phi_hat_bar){$\Bar{\Hat\phi}_t$};

\node [bignode] at (1,\yspace*6)(v){$v^m_t$};
\node [bignode] at (5,\yspace*6)(v+){$v^+_t$};
\node [bignode] at (1,\yspace*7)(rewards){$U_t$};

\node[sqr, bignode] at (3,\yspace*7)(loss){$\mathcal{L}_{v}$};

\draw [ultra thick,-latex,preaction={draw,green!40,-,double=green!40,double distance=2\pgflinewidth,}] (o_t) -- (h_t);
\draw [-latex] (h_t) -- (phi_hat);
\draw [-latex] (phi_hat) -- (phi_hat_bar);
\draw [ultra thick,-latex,preaction={draw,green!40,-,double=green!40,double distance=2\pgflinewidth,}] (phi_hat_bar) -- (v);

\draw [ultra thick,-latex,preaction={draw,green!40,-,double=green!40,double distance=2\pgflinewidth,}] (h_tm1) -- (h_t);
\draw [-latex] (h_t) -- (time);
\draw [-latex] (time) -- (h_tk);
\draw [-latex] (h_tk) -- (h_tkp1);

\draw [-latex] (o_tk) -- (h_tk);
\draw [-latex] (h_tk) -- (phi);
\draw [-latex] (phi) -- (v+);
\draw [-latex] (h_t) -- (v+);
\draw [ultra thick,-latex,preaction={draw,green!40,-,double=green!40,double distance=2\pgflinewidth,}] (h_t) to [out=130,in=230] (v);
\draw [ultra thick,-latex,preaction={draw,green!40,-,double=green!40,double distance=2\pgflinewidth,}] (rewards) to (loss);
\draw [ultra thick,-latex,preaction={draw,green!40,-,double=green!40,double distance=2\pgflinewidth,}] (v) to (loss);

\draw[ultra thick] (7,1.5) -- (7,14);

\end{tikzpicture}}
        }
    \end{minipage}
    \caption{}
    \label{fig:himo_diag_Lv}
\end{subfigure}
\begin{subfigure}{.24\textwidth}
    \begin{minipage}{\textwidth}
        \resizebox{.99\textwidth}{!}{\setlength{\fboxrule}{0.0pt} 
            \fbox{\begin{tikzpicture}[every node/.style={scale=1.5, minimum height=0.5cm,minimum width=0.5cm},,font=\large]

\tikzstyle{sqr}=[rectangle, inner sep=1pt, minimum size = 6.5mm, thick, draw =blue!80, fill=blue!10, node distance = 20mm, scale=0.9]
\tikzstyle{stop}=[forbidden sign, minimum size=.3cm, line width=2, draw=red!50, fill=white] 
\tikzset{%
  bignode/.style     = {font=\fontsize{15}{18}\selectfont},
}
\newcommand{\yspace}{2.0}

\node [bignode] at (1,\yspace*1)(o_t){$o_t$};
\node [bignode] at (5,\yspace*1)(o_tk){$o_{t+k}$};

\node [bignode] at (-.5,\yspace*2)(h_tm1){};
\node [bignode] at (1,\yspace*2)(h_t){$h_t$};
\node [bignode] at (5,\yspace*2)(h_tk){$h_{t+k}$};
\node [bignode] at (6.5,\yspace*2)(h_tkp1){};
\node[stop] at (1.45,\yspace*2.45){};
\node [bignode] at (2,\yspace*3)(h_t_bar){$\Bar{h}_t$};
\node[stop] at (5,\yspace*2.45){};
\node [bignode] at (5,\yspace*3)(h_tk_bar){$\Bar{h}_{t+k}$};
\node [bignode] at (3,\yspace*2)(time){$\dots$};

\node [bignode] at (5,\yspace*4)(phi){$\phi_t$};

\node [bignode] at (1,\yspace*4)(phi_hat){$\Hat{\phi}_{t}$};

\node [bignode] at (1,\yspace*6)(v){$v^m_t$};
\node [bignode] at (5,\yspace*6)(v+){$v^+_t$};
\node [bignode] at (1,\yspace*7)(rewards){$U_t$};

\node[sqr, bignode] at (3,\yspace*7)(loss){$\mathcal{L}_{v^+}$};

\draw [-latex] (o_t) -- (h_t);
\draw [-latex] (h_t) -- (phi_hat);
\draw [-latex] (phi_hat) -- (v);

\draw [-latex] (h_tm1) -- (h_t);
\draw [-latex] (h_t) -- (time);
\draw [-latex] (time) -- (h_tk);
\draw [-latex] (h_tk) -- (h_tkp1);

\draw [-latex] (o_tk) -- (h_tk);
\draw [-latex] (h_tk) -- (h_tk_bar);
\draw [ultra thick,-latex,preaction={draw,green!40,-,double=green!40,double distance=2\pgflinewidth,}] (phi) -- (v+);
\draw [-latex] (h_t) -- (h_t_bar);
\draw [ultra thick,-latex,preaction={draw,green!40,-,double=green!40,double distance=2\pgflinewidth,}] (h_t_bar) -- (v+);
\draw [-latex] (h_t) to [out=130,in=230] (v);
\draw [ultra thick,-latex,preaction={draw,green!40,-,double=green!40,double distance=2\pgflinewidth,}] (rewards) to (loss);
\draw [ultra thick,-latex,preaction={draw,green!40,-,double=green!40,double distance=2\pgflinewidth,}] (v+) to (loss);
\draw [ultra thick,-latex,preaction={draw,green!40,-,double=green!40,double distance=2\pgflinewidth,}] (h_tk_bar) to (phi);

\draw[ultra thick] (7,1.5) -- (7,14);

\end{tikzpicture}}
        }  
    \end{minipage}
    \caption{}
    \label{fig:himo_diag_Lv+}
\end{subfigure}
\begin{subfigure}{.24\textwidth}
    \begin{minipage}{\textwidth}
      \centering
        \resizebox{.99\textwidth}{!}{\setlength{\fboxrule}{0.0pt} 
            \fbox{\begin{tikzpicture}[every node/.style={scale=1.5, minimum height=0.5cm,minimum width=0.5cm},font=\large]

\usetikzlibrary{intersections}

\tikzstyle{sqr}=[rectangle, inner sep=1pt, minimum size = 6.5mm, thick, draw =blue!80, fill=blue!10, node distance = 20mm, scale=0.9]
\tikzstyle{stop}=[forbidden sign, minimum size=.3cm, line width=2, draw=red!50, fill=white] 
\tikzset{%
  bignode/.style     = {font=\fontsize{15}{18}\selectfont},
}
\newcommand{\yspace}{2.0}

\node [bignode] at (1,\yspace*1)(o_t){$o_t$};
\node [bignode] at (5,\yspace*1)(o_tk){$o_{t+k}$};

\node [bignode] at (-.5,\yspace*2)(h_tm1){};
\node [bignode] at (1,\yspace*2)(h_t){$h_t$};
\node [bignode] at (5,\yspace*2)(h_tk){$h_{t+k}$};
\node [bignode] at (6.5,\yspace*2)(h_tkp1){};
\node [bignode] at (3,\yspace*2)(time){$\dots$};

\node [bignode] at (5,\yspace*4)(phi){$\phi_t$};
\node[stop] at (5,\yspace*4.45){};
\node [bignode] at (5,\yspace*5)(phi_bar){$\Bar{\phi}_t$};

\node [bignode] at (1,\yspace*4)(phi_hat){$\Hat{\phi}_{t}$};

\node [bignode] at (1,\yspace*6)(v){$v^m_t$};
\node [bignode] at (5,\yspace*6)(v+){$v^+_t$};
\node [bignode] at (1,\yspace*7)(rewards){$U_t$};

\node[sqr, bignode] at (3,\yspace*7)(loss){$\mathcal{L}_{\text{model}}$};

\draw [ultra thick,-latex,preaction={draw,green!40,-,double=green!40,double distance=2\pgflinewidth,}] (o_t) -- (h_t);
\draw [ultra thick,-latex,preaction={draw,green!40,-,double=green!40,double distance=2\pgflinewidth,}] (h_t) -- (phi_hat);
\draw [-latex] (phi_hat) -- (v);

\draw [ultra thick,-latex,preaction={draw,green!40,-,double=green!40,double distance=2\pgflinewidth,}] (h_tm1) -- (h_t);
\draw [-latex] (h_t) -- (time);
\draw [-latex] (time) -- (h_tk);
\draw [-latex] (h_tk) -- (h_tkp1);

\draw [-latex] (o_tk) -- (h_tk);
\draw [-latex] (h_tk) -- (phi);
\draw [-latex] (phi) -- (phi_bar);
\draw [-latex] (phi_bar) -- (v+);
\draw [-latex] (h_t) to [out=130,in=230] (v);
\draw [ultra thick,-latex,preaction={draw,green!40,-,double=green!40,double distance=2\pgflinewidth,}] (phi_hat) -- (loss);

\draw [name path=line 1, ultra thick,-latex,preaction={draw,green!40,-,double=green!40,double distance=2\pgflinewidth,}] (phi_bar) -- (loss);
\draw [-latex] (h_t) -- (v+);


\end{tikzpicture}}
        }
    \end{minipage}
    \caption{}
    \label{fig:himo_diag_Lmodel}
\end{subfigure}
\caption{\textbf{HiMo architecture.} (a) depicts the overall HiMo architecture, where $v^m_t$ is the model-augmented value prediction available at test time $t$ ($\Hat{\phi}_{t}$ is a model of $\phi_t$), and $v^+_t$ is the hindsight value for step $t$ that uses additional privileged observations (up to $o_{t+k}$). Nodes are tensors and edges are (learnable) tensor transformations, e.g.\ neural networks. The panels (b), (c) and (d) illustrate the different learning losses, where \textcolor{green}{green} identifies learned edges, and the red symbol \textcolor{red}{\O} denotes that gradients are stopped in backpropagation. In particular, (b) shows the subset of the network used for the value function loss $\mathcal{L}_{v}$, (c) the hindsight value function loss $\mathcal{L}_{v^+}$ and (d) the model loss $\mathcal{L}_{\text{model}}$. $U_t$ denotes the value target at time $t$.}
\label{fig:himo_diag}
\end{figure}
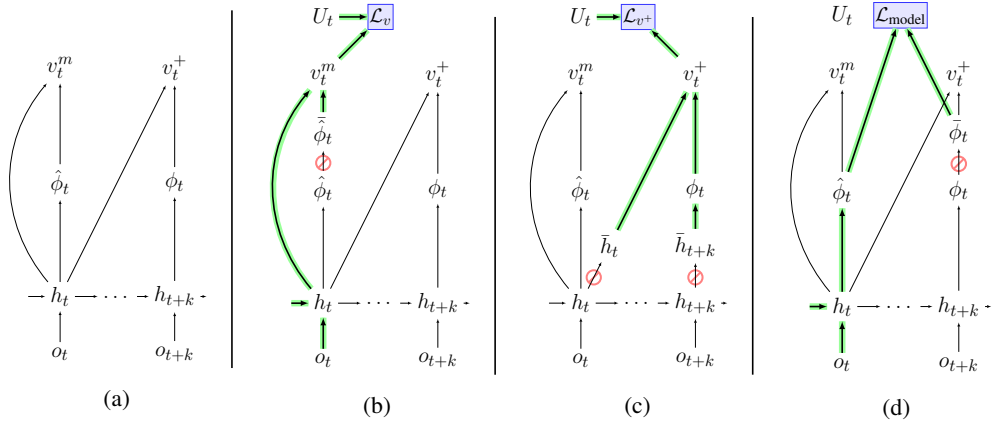

\subsection{Relation to bootstrapping}
\label{sec:bootstrapping}

It is worth emphasizing the difference between hindsight modelling and the bootstrapping typically done in temporal difference learning. Both exploit a partial trajectory for value prediction, but have different objectives and mechanisms.
Bootstrapping helps to provide potentially better value targets from a trajectory (e.g., to reduce variance or deal with off-policyness), but it does not give a richer training signal in itself as it does not communicate more information about the future than a return statistic (a scalar value). 
Consider for example a \emph{deterministic} scenario where we want to estimate the value of a state $s_0$ drawn from some distribution, and $v_\theta(s_t) = v(s_t), \forall t > 0$, i.e., all subsequent value estimates are perfect. Hence, the MC return and all the $n$-step returns from any $s_0$ are equal, so bootstrapping has no consequence. Yet, there might still be some useful information in the future trajectory which would accelerate the learning of $v_\theta(s_0)$, if predicted at $s_0$. For example, suppose $s_2$ has some bits correlated with the trajectory's final return, which are easy to predict from $s_0$. Hindsight modeling has precisely the potential of learning such a vector-valued feature of $s_2$. One concrete example can be found in Appendix section \ref{sec:appendix_introduction_example}.

\section{Architecture}
\label{sec:arch}

We now describe an architecture for HiMo that we found to work at scale and that we tested in Sec.~\ref{sec:experiments}.
To deal with partial observability, we employ a recurrent neural network, the state-RNN, which replaces the state $s_t$ with a learned internal state $h_t$, a function of the current observation $o_t$ and past observations through $h_{t-1}$: $h_t=f(o_t, h_{t-1}; \stateParam)$, where we have extended the parameter description of $v^m$ as $\causalVParam=(\causalPhiParam, \causalPsiParam, \stateParam)$. 
The model-based value function $v^m$ and the hindsight value function $v^+$ share the same internal state representation $h$, but the learning of $v^+$ assumes $h$ is fixed (we do not backpropagate through the state-RNN in hindsight). In addition, we force $\hat\phi$ to only be learned through $\mathcal{L}_\text{model}$, so that $v^m$ uses it as an additional input.\footnote{This is motivated by the imagination-augmented agents work by \citet{racaniere2017imagination}.} Denoting with the bar notation quantities treated as non-differentiable (i.e. where the gradient is stopped) this can be summarized as:
\begin{align}
v^+(h_{t}, h_{t+k}; \vParam) = \psi_\psiParam(\overline{h_t}, \vphi_{\phiParam}(\overline{h_{t+k}})), \;\; v^m(h_{t}; \causalVParam) = \psi_\causalPsiParam(h_t, \overline{\hat\vphi_{\causalPhiParam}(h_{t}})).
\end{align}
The different losses in the HiMo architecture are combined in the following way:
\begin{align}
\mathcal{L}(\vParam, \causalVParam) = \mathcal{L}_{v}(\causalVParam) + \alpha \mathcal{L}_{v^+}(\vParam) + \beta \mathcal{L}_\text{model}(\causalVParam{}).
\end{align}
A diagram of the architecture is presented in Fig.~\ref{fig:himo_diag} (see also details in the appendix).
Computing $v^+$ and training $\hat\phi$ is done in online fashion by simply delaying the updates by $k$ steps (just like in the computation of an $n$-step return).  
This architecture can be straightforwardly generalized to cases where we also output a policy $\pi_\causalVParam$ for an actor-critic setup, providing $h$ and $\hat\vphi$ as inputs to a policy network.\footnote{In this case, the total loss $\mathcal{L}(\vParam, \causalVParam)$ also contains an actor loss to update $\pi_\causalVParam$ and a negative entropy loss.} For a Q-value based algorithm like Q-learning, we predict a vector of values $q^m$ and $q^+$ instead of $v^m$ and $v^+$. 
Sec.~\ref{sec:experiments} describes how we use this architecture with both actor-critic and a value-based approaches, including details of the value target $U_t$ and update rules in each scenario.

\section{Related Work}

Recent works used auxiliary predictions successfully in RL as a way to obtain a richer signal for representation learning~\citep{jaderberg2016unreal, sutton2011horde}. However, these additional prediction tasks are hard-coded, so they cannot adapt to the task demand when needed. We see them as a complementary approach to more efficient learning in RL. An exception is the recent work by~\citet{veeriah2019discovery}, done concurrently to our work, which studies an alternative formulation that employs meta-gradients to discover modelling targets (referred to as ``questions").

\citet{buesing2018woulda} have considered using observations in an episode trajectory in hindsight to infer variables in a structural causal model of the dynamics, allowing more efficient reasoning, in a model-based way, about counterfactual actions. However this approach requires learning an accurate generative model of the environment.
Other recent work in model-based RL has considered implicit models that directly predict the relevant quantities for planning~\citep{silver2017predictron, junhyuk2017vpn, schrittwieser2019muzero}, but these do not exploit the future observations. We see these approaches as complementary to our work. 
In another recent model-based work by~\citet{farahmand2018ivaml}, the model loss minimizes some form of value consistency: the difference between the value at some next state and the expected value of starting in the previous state. While this makes the model sensitive to the value, it only exploits the future real state through $v$ as a learning signal (just like in bootstrapping). A similar approach around reward consistency is proposed by~\citet{gelada2019deepmdp}. Furthermore, our use of a model output as an input, to be robust to model errors, is inspired by \cite{racaniere2017imagination}.

In supervised learning, the learning using privileged information (LUPI) framework~\cite{vapnik2015learning} considers ways of leveraging privileged information at train time. Although their approach does not apply directly in RL, some of our approach can be understood in their framework, considering the future trajectory as the privileged information for value prediction.
Privileged information coming from full state observation has been leveraged in RL to learn a better critic in asymmetric actor-critic architectures~\citep{pinto2017asymmetric,zhu2018diverse}. However this does not use future information and only applies to settings where special side-information (full state) is available at training time.

In Hindsight Credit Assignment (HCA)~\cite{harutyunyan2019hca}, future information is leveraged in backward models to explain away the noise and improve credit assignment. Although related, our approach differs in how and why we exploit the future information: we aim to facilitate learning a (causal) state representation that would be difficult to learn from returns alone, rather than to understand an action's contribution to an outcome. If HCA can be scaled to larger domains, it would complement our approach nicely.
Other RL works exploit hindsight information with a more distant motivation, to counter-factually change the goal as a form of automated curriculum~\cite{andrychowicz2017hindsight}.

\section{Experiments}
\label{sec:experiments}

The illustrative example in Sec.~\ref{sec:example} demonstrated the positive effect of hindsight modelling in a simple policy evaluation setting. We now explore these benefits in the context of policy optimization in challenging domains: a custom navigation task called Portal Choice, and Atari 2600. To demonstrate the generality and scalability of our approach, we test hindsight value functions in the context of two high-performance RL algorithms: IMPALA~\citep{espeholt2018impala} and R2D2~\citep{kapturowski2018r2d2}.

\subsection{Portal Choice task}
\label{sec:portal_choice}

The Portal Choice (Fig.~\ref{fig:teleporter}) is a two-phase navigation task. In phase one, an agent is presented with a contextual choice between two portals, whose positions vary between episodes. The position of the portal determines its destination in phase two, one of two different goal rooms (green and red rooms). Critically, the reward when terminating the episode in the goal room depends on both the color of the goal room in phase two and a visually indicated combinatorial context shown in the first phase. If the context matches the goal room color, then a reward of $2$ is given, otherwise the reward is $0$ when terminating the episode (see appendix for full details).

An easy suboptimal solution is to select the portal at random and finish the episode in the resulting goal room by reaching the goal pixel, which results in a positive reward of $1$ on average.
A more difficult strategy is to be selective about which portal to take depending on the context, in order to get the reward of $2$ on every episode.
A model-free agent has to learn the joint mapping from contexts and portal positions to rewards. Although the task is not visually complex, the context is combinatorial in nature (the agent needs to count randomly placed pixels) and the joint configuration space of context and portal is fairly large (around 250M).
Since the mapping from portal position to rooms does not depend on context, learning the portal-room mapping independently is more efficient.

\begin{wrapfigure}{r}{6cm}
\vspace{-0.3cm}
\centering
\includegraphics[width=0.4\textwidth]{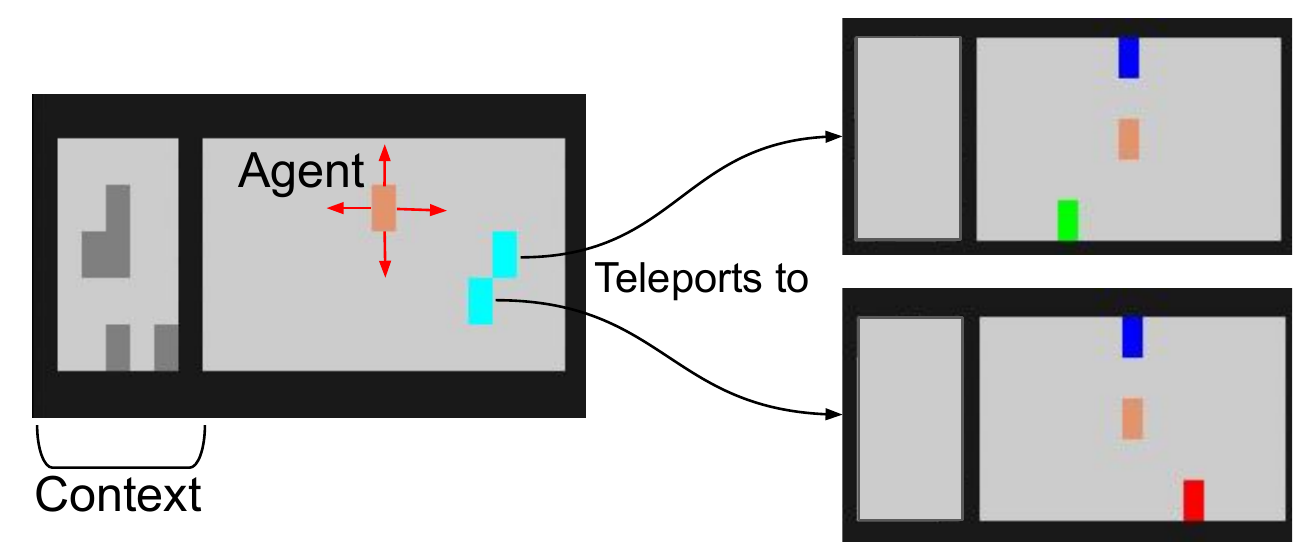}
\caption{\textbf{Portal Choice task.} Left: an observation in the starting room of the Portal Choice task. Two portals (\textcolor{cyan}{cyan} squares) are available to the agent (\textcolor{orange}{orange}), each of them leading deterministically to a different room, based on their position. Right: The two possible goal rooms are identified by a \textcolor{green}{green} and \textcolor{red}{red} pixel. The reward upon reaching the goal (\textcolor{blue}{blue} square) is a function of the room and the initial context.}
\label{fig:teleporter}
\vspace{-0.5cm}
\end{wrapfigure}

\begin{figure}[thb]
  \centering
  \begin{subfigure}[b]{0.34\linewidth}
    \centering
    \includegraphics[width=\textwidth]{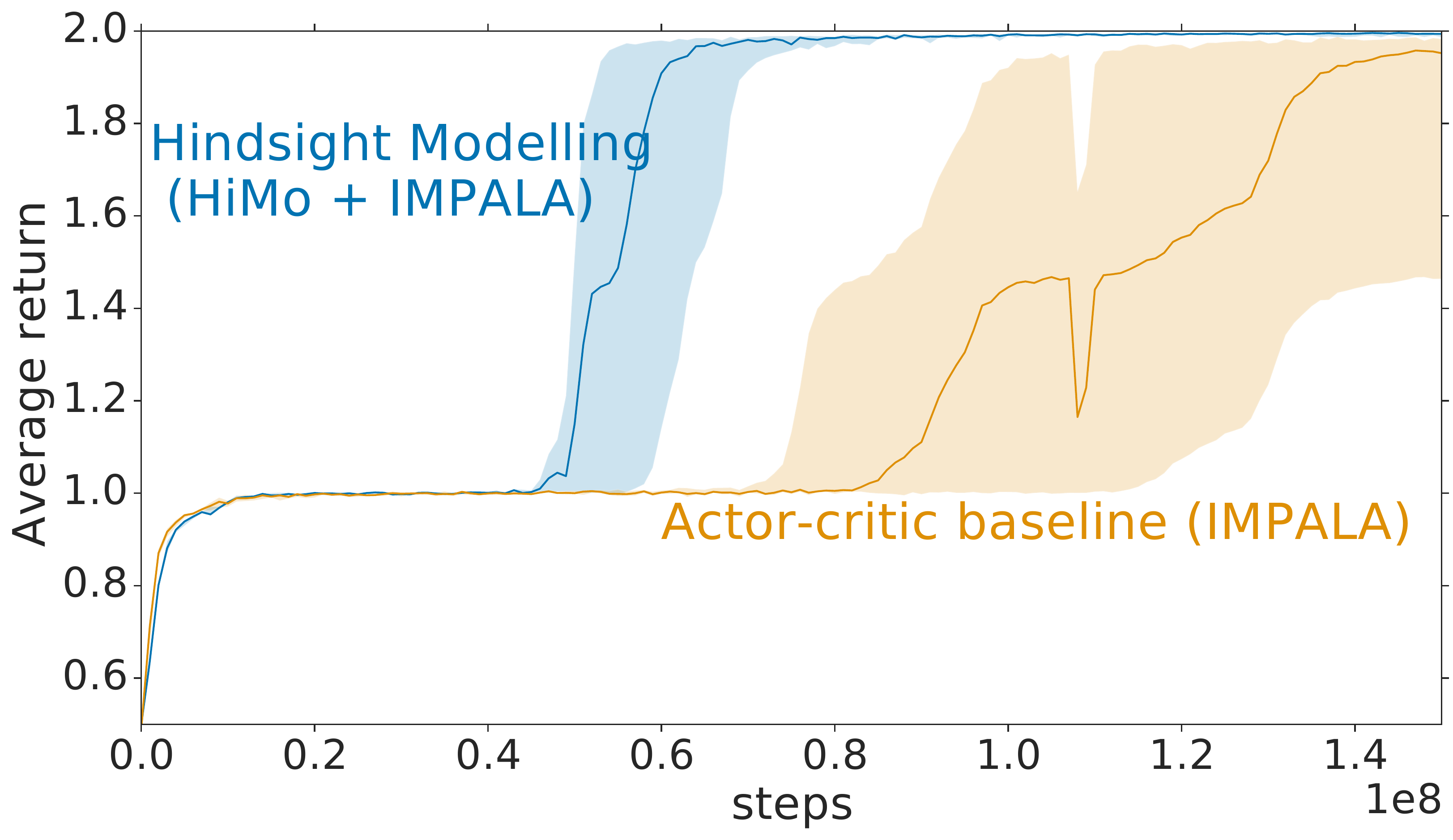}
    \caption{\label{fig:teleporter_result_a}}
  \end{subfigure}%
  \begin{subfigure}[b]{0.34\linewidth}
    \includegraphics[width=\textwidth]{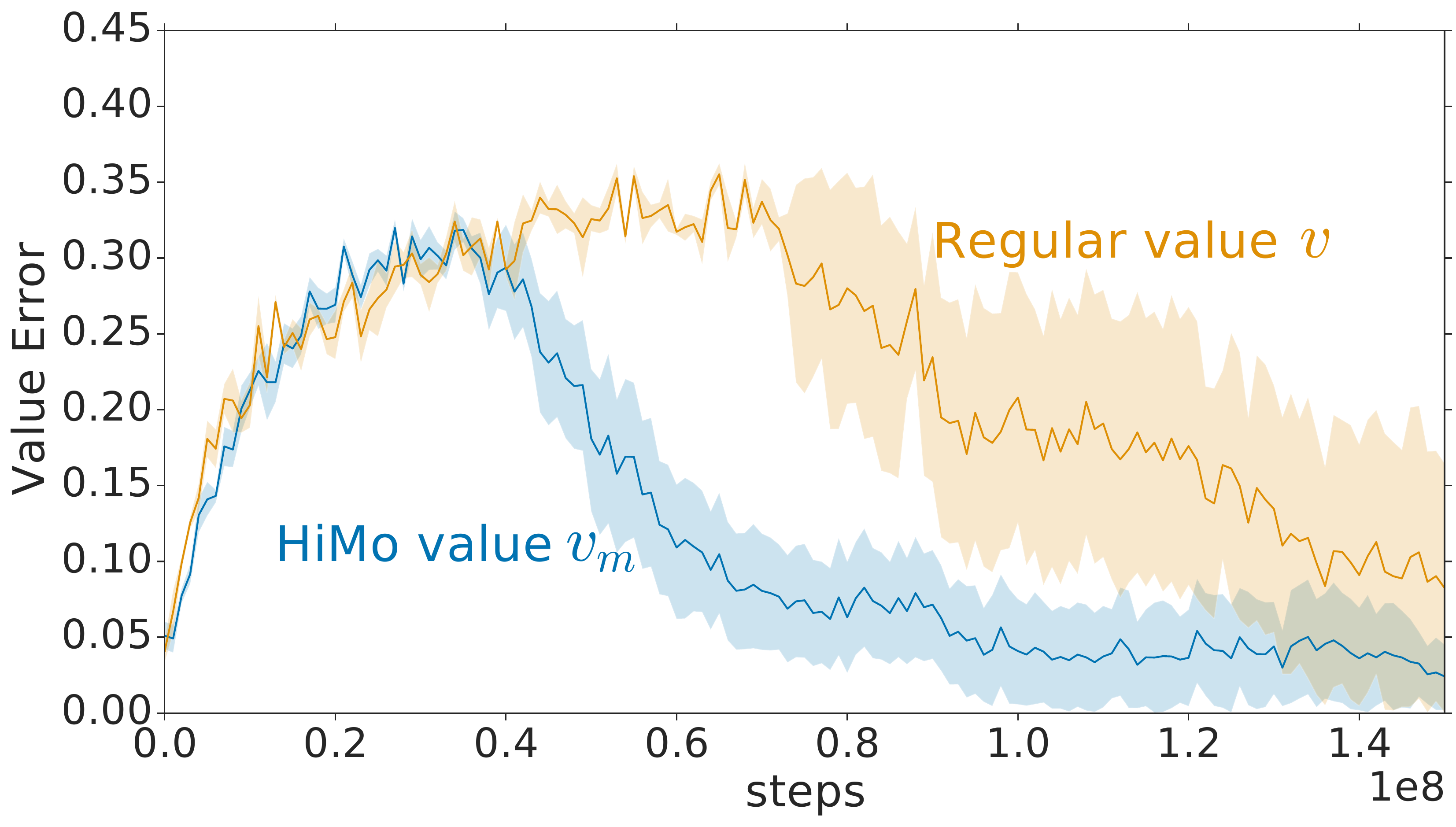}
    \caption{\label{fig:teleporter_result_b}}
  \end{subfigure}
  \begin{subfigure}[b]{0.31\linewidth}
    \includegraphics[width=\textwidth]{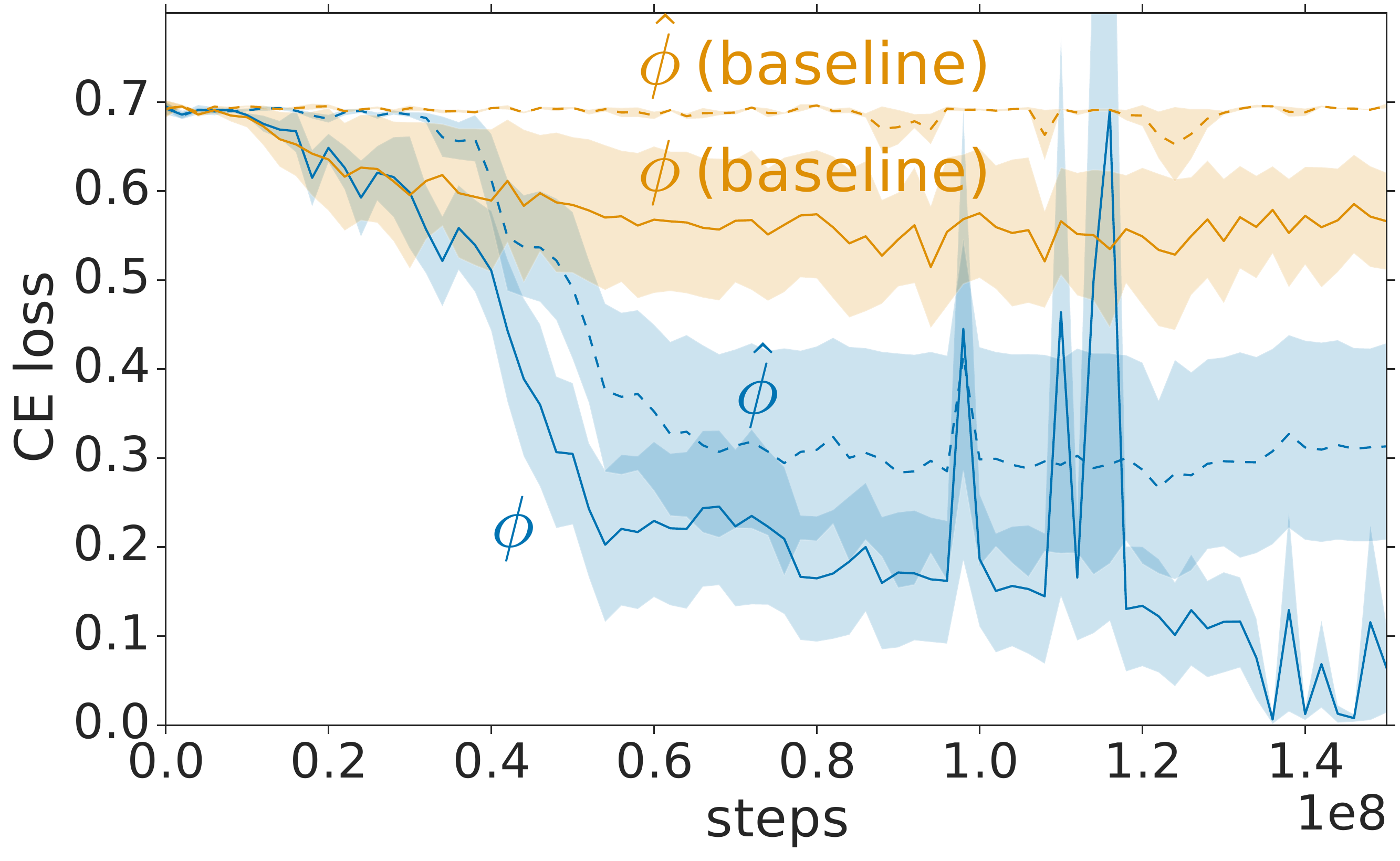}
    \caption{\label{fig:teleporter_result_c}}
  \end{subfigure}
  \caption{\textbf{Portal Choice results.} (a) Median performance as a function of environment steps, out of 4 seeds.
  (b) shows the value error averaged across states on the same x-axis scale for different value function estimates. (c) shows the cross-entropy loss of a classifier (for analysis) that takes as input $\vphi$ (solid line) or $\hat\vphi$ (dotted line) and predicts the identity of the goal room (red or green) as a binary classification task. The HiMo curves (blue) show that information about the room identity becomes present first in $\vphi$ and then gets captured in its model $\hat\vphi$. For the baseline ($\alpha=\beta=0$), $\hat\phi$ is not trained based on $\phi$ and only classifies the room identity at chance level.}
\end{figure}

For this domain, we implemented the HiMo architecture within a distributed actor-critic agent named IMPALA~\cite{espeholt2018impala}. In this case, the target $U_t$ to train $v^m$ (used as a critic in this context) and $v^+$ is the V-trace target~\citep{espeholt2018impala}, which accounts for off-policy corrections between the behavior policy and the learner policy. The actor ($\pi$) shares the same network as the critic: it also receives $h$ and $\hat\vphi$ as inputs.

We found that HiMo learned reliably faster to reach the optimal behavior, compared to the vanilla IMPALA baseline that shared the same network capacity (see Fig.~\ref{fig:teleporter_result_a}). Hindsight Credit Assignment~\cite{harutyunyan2019hca} was also tested in this task, but it did not significantly improve the performance beyond IMPALA.
With HiMo, the hindsight value $v^+$ rapidly learns to predict whether the portal-context association is rewarding based on seeing the goal room color. To do this, $\vphi$ learns to predict the new information from the future which is useful that prediction: the identity of the room (see Fig~\ref{fig:teleporter_result_c}). The prediction of $\vphi$ becomes effectively a model of the mapping from portal to room identity (since the context does not correlate with the room identity). Having access to such a mapping through $\hat\vphi$ helps the value prediction (Fig~\ref{fig:teleporter_result_b}),  leading to better action selection.

\subsection{Atari}

\begin{figure}[htb]
\centering
\includegraphics[width=\textwidth]{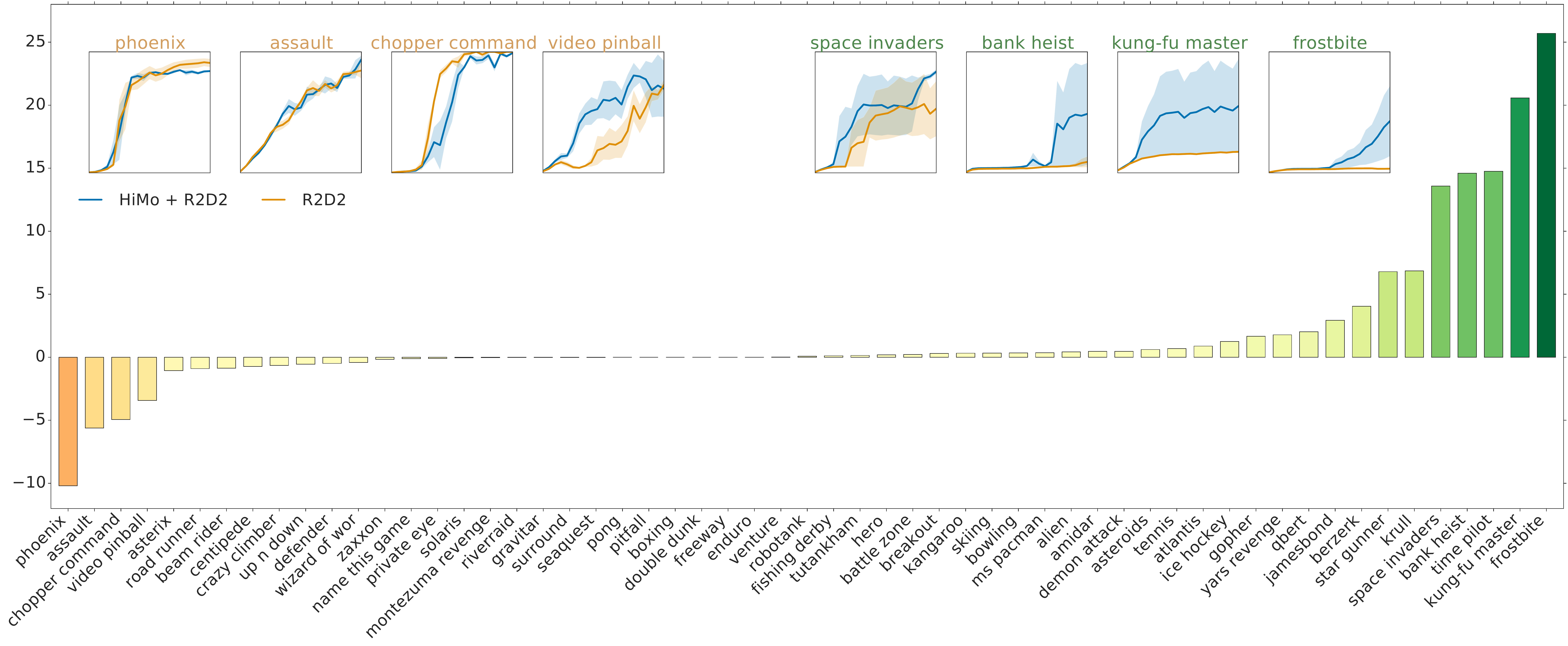}
\caption{\textbf{Comparison of human normalized score in Atari.} Difference in \emph{human normalized score} per game in Atari, HiMo versus the improved R2D2 after 200k learning steps, alongside learning curves for a selection of HiMo worst and top performing games. The high variance of the curves in Atari between seeds can often be explained by the variable timestep at which different seeds jump from one performance plateau to the next.
}
\label{fig:atari_himo_baseline_diff}
\end{figure}

We tested our approach in Atari 2600 videogames using the Arcade Learning Environment~\citep{bellemare2013ale}. We added HiMo on top of Recurrent Replay Distributed DQN (R2D2)~\cite{kapturowski2018r2d2}, a DQN-based distributed architecture which previously achieved state-of-the-art scores in Atari games. 
In this value-based setting, HiMo trains $q^m(\cdot, \cdot; \eta)$ and $q^+(\cdot, \cdot; \theta)$ based on $n$-step return targets:
$U_t = g\left(\sum_{m=0}^{n-1} \gamma^m R_{t+m} + \gamma^n g^{-1}\left(q^m(S_{t+n}, A^*; \causalVParam^-) \right) \right)$, 
where $g$ is an invertible function, $\causalVParam^-$ are the periodically updated target network parameters (as in DQN \cite{mnih2015dqn}), and $A^*= \argmax_a q^m(S_{t+n},a; \causalVParam)$ (the Double DQN update \cite{vanhasselt2016ddqn}). Other implementation details are described in the appendix.

\begin{figure}[htb]
  \centering
  \begin{subfigure}[b]{0.33\linewidth}
    \hspace{0.1in}
    \vspace{0.2in}
    \includegraphics[width=.8\textwidth]{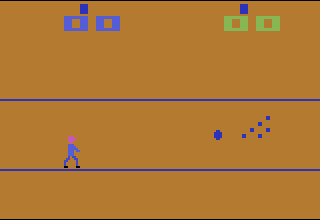}
    \caption{\label{fig:bowling_result_a}}
  \end{subfigure}%
  \begin{subfigure}[b]{0.3\linewidth}
    \includegraphics[width=\textwidth]{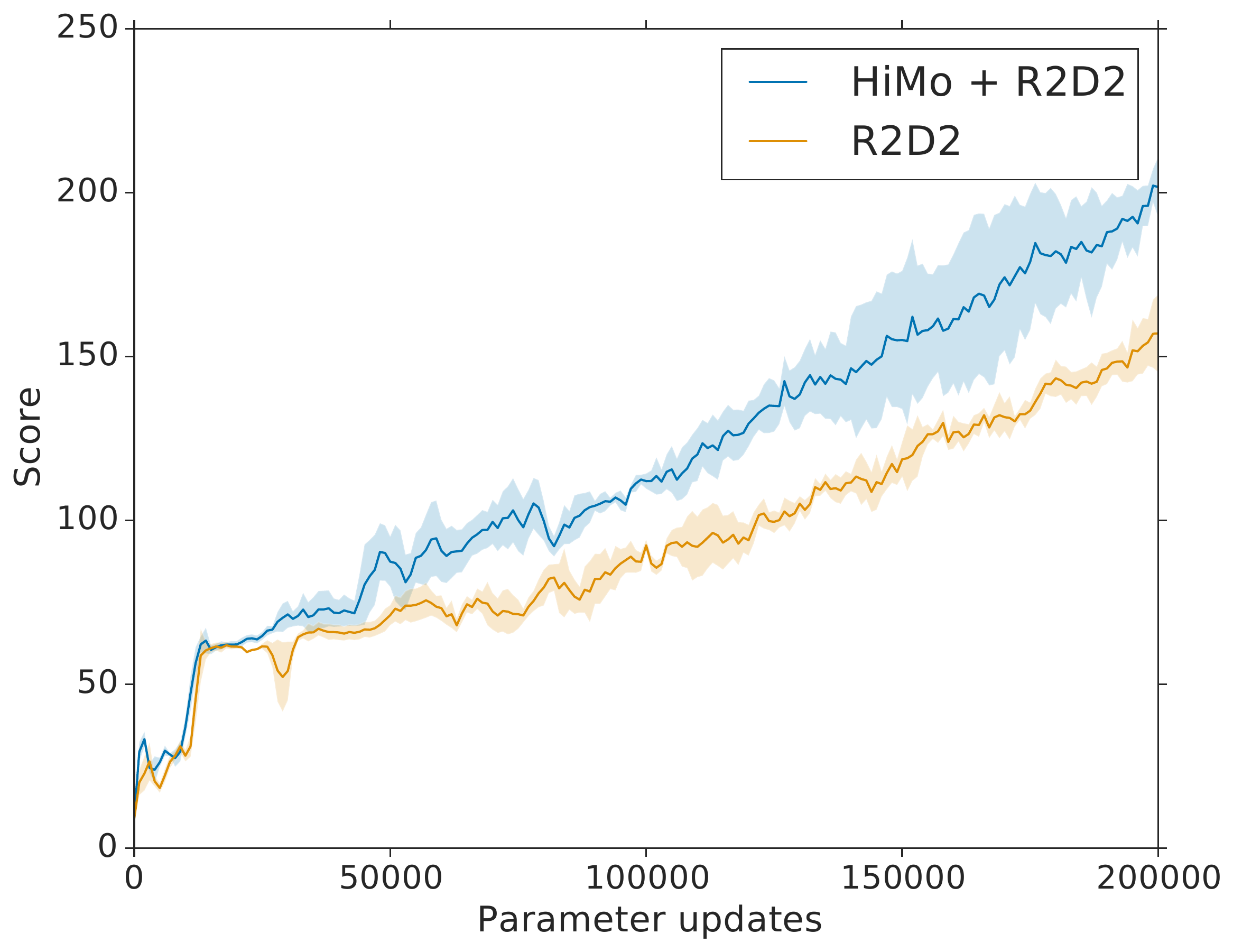}
    \caption{\label{fig:bowling_result_b}}
  \end{subfigure}
  \begin{subfigure}[b]{0.3\linewidth}
    \includegraphics[width=\textwidth]{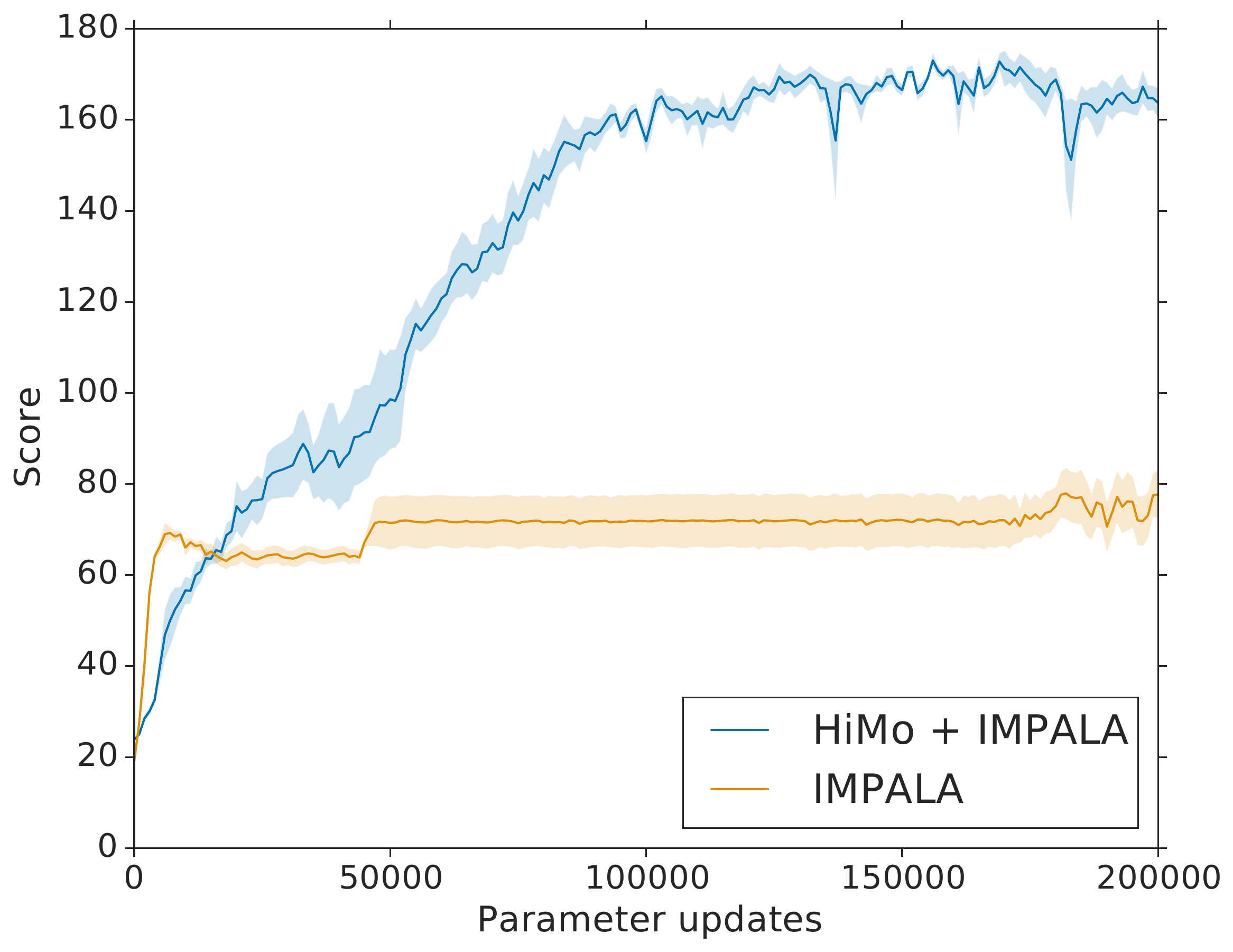}
    \caption{\label{fig:bowling_result_c}}
  \end{subfigure}
  \caption{\textbf{Performance in the Atari bowling game.} In this game, a delayed reward can be predicted by the intermediate event of the ball hitting the pins (a). (b-c) Learning curves for HiMo  in two RL settings: a value-based method (R2D2) in (b) and a policy-gradient method (IMPALA) in (c).}
  \label{fig:bowling}
\end{figure}

We ran HiMo on 57 Atari games for 200k gradient steps (around 1 day of training), with 3 seeds for each game. The evaluation averages the score between 200 episodes across seeds, each lasting a maximum of 30 minutes and starting with a random number (up to
30) of no-op actions.
In order to compare scores between different games and aggregate results, we computed normalized scores for each game based on random and human performance, so that 0\% corresponds to random performance and 100\% corresponds to human. We observed an increase of 132.5\% in the median human normalized score compared to the R2D2 baseline with the same network capacity. Aggregate results are reported in Table~\ref{tab:atari_stat}. Fig.~\ref{fig:atari_himo_baseline_diff} details the difference in normalized score between HiMo and our R2D2 baseline for all games individually.
We note that the original R2D2 results reported by~\citet{kapturowski2018r2d2}, which used a similar hardware configuration but a different network architecture, were around 750\% median human normalized score after a day of training. 

\begin{wraptable}{r}{6cm}
\caption{Median and mean human normalized scores across 57 Atari2600 games for HiMo versus the R2D2 baseline after a day of training.}
\begin{tabular}{c|c|c}
\toprule
       & R2D2  & R2D2 + HiMo     \\ 
\midrule
Median & 832.5\%  & \textbf{965\%}  \\
Mean   & 2818.5\% & \textbf{2980\%} \\
\bottomrule
\end{tabular}
\label{tab:atari_stat}
\vspace{-0.2cm}
\end{wraptable}

We observed that HiMo either offered improved data efficiency or had no overwhelming adverse effects on training performance. In Fig.~\ref{fig:atari_himo_baseline_diff} we show training curves for a selection of representative Atari environments, which seem to indicate that in the worst case scenario, HiMo's training performance reduces to R2D2's.

Bowling is one of the Atari games where rewards are delayed with relevant information being communicated through intermediate observations (the ball hitting the pins), similarly to the baseball example in the introduction. We found that HiMo performed better than the R2D2 baseline in this particular game. We also ran HiMo in the actor-critic setup (IMPALA) described previously, finding similar performance gain with respect to the model-free baseline. These results are presented in Fig.~\ref{fig:bowling}.

\section{Conclusion}

High-dimensional observations in the intermediate future often contain task-relevant features that can facilitate the prediction of an RL agent's final return. 
We introduced HiMo, an RL algorithm that leverages this insight through a two-stage approach.
First, by reasoning in hindsight, the algorithm learns to extract relevant features of future observations that would be been most helpful for estimating the final value.
Then, a forward model is learned to predict these features and used as input to an improved value function, yielding better policy evaluation at test time.
We demonstrated that this approach can help tame complexity in environments with rich dynamics at scale, yielding increased data efficiency and improving the performance of state-of-the-art model-free architectures.

\newpage

\section*{Broader Impact}

This work carries fundamental research in reinforcement learning (RL) using simulated data, with the immediate goal to improve RL techniques. While we are not directly targeting any application domain in this paper, the future impact of this research will be dependent on the context in which such techniques are deployed.

\begin{ack}
We thank the anonymous reviewers for their useful feedback.
\end{ack}

\bibliography{himo}

\begin{thebibliography}{26}
\providecommand{\natexlab}[1]{#1}
\providecommand{\url}[1]{\texttt{#1}}
\expandafter\ifx\csname urlstyle\endcsname\relax
  \providecommand{\doi}[1]{doi: #1}\else
  \providecommand{\doi}{doi: \begingroup \urlstyle{rm}\Url}\fi

\bibitem[Andrychowicz et~al.(2017)Andrychowicz, Wolski, Ray, Schneider, Fong,
  Welinder, McGrew, Tobin, Abbeel, and Zaremba]{andrychowicz2017hindsight}
Marcin Andrychowicz, Filip Wolski, Alex Ray, Jonas Schneider, Rachel Fong,
  Peter Welinder, Bob McGrew, Josh Tobin, OpenAI~Pieter Abbeel, and Wojciech
  Zaremba.
\newblock Hindsight experience replay.
\newblock In \emph{Advances in neural information processing systems}, pages
  5048--5058, 2017.

\bibitem[Bellemare et~al.(2013)Bellemare, Naddaf, Veness, and
  Bowling]{bellemare2013ale}
Marc~G Bellemare, Yavar Naddaf, Joel Veness, and Michael Bowling.
\newblock The arcade learning environment: An evaluation platform for general
  agents.
\newblock \emph{Journal of Artificial Intelligence Research}, 47:\penalty0
  253--279, 2013.

\bibitem[Buesing et~al.(2018)Buesing, Weber, Zwols, Racaniere, Guez, Lespiau,
  and Heess]{buesing2018woulda}
Lars Buesing, Theophane Weber, Yori Zwols, Sebastien Racaniere, Arthur Guez,
  Jean-Baptiste Lespiau, and Nicolas Heess.
\newblock Woulda, coulda, shoulda: Counterfactually-guided policy search.
\newblock \emph{arXiv preprint arXiv:1811.06272}, 2018.

\bibitem[Espeholt et~al.(2018)Espeholt, Soyer, Munos, Simonyan, Mnih, Ward,
  Doron, Firoiu, Harley, Dunning, et~al.]{espeholt2018impala}
Lasse Espeholt, Hubert Soyer, Remi Munos, Karen Simonyan, Volodymir Mnih, Tom
  Ward, Yotam Doron, Vlad Firoiu, Tim Harley, Iain Dunning, et~al.
\newblock Impala: Scalable distributed deep-{RL} with importance weighted
  actor-learner architectures.
\newblock In \emph{International Conference on Machine Learning}, pages
  1407--1416, 2018.

\bibitem[Farahmand(2018)]{farahmand2018ivaml}
Amir-massoud Farahmand.
\newblock Iterative value-aware model learning.
\newblock In \emph{Advances in Neural Information Processing Systems}, pages
  9072--9083, 2018.

\bibitem[Gelada et~al.(2019)Gelada, Kumar, Buckman, Nachum, and
  Bellemare]{gelada2019deepmdp}
Carles Gelada, Saurabh Kumar, Jacob Buckman, Ofir Nachum, and Marc~G.
  Bellemare.
\newblock {D}eep{MDP}: Learning continuous latent space models for
  representation learning.
\newblock In \emph{International Conference on Machine Learning}, pages
  2170--2179, 2019.

\bibitem[Guez et~al.(2019)Guez, Mirza, Gregor, Kabra, Racaniere, Weber, Raposo,
  Santoro, Orseau, Eccles, et~al.]{guez2019drc}
Arthur Guez, Mehdi Mirza, Karol Gregor, Rishabh Kabra, Sebastien Racaniere,
  Theophane Weber, David Raposo, Adam Santoro, Laurent Orseau, Tom Eccles,
  et~al.
\newblock An investigation of model-free planning.
\newblock In \emph{International Conference on Machine Learning}, pages
  2464--2473, 2019.

\bibitem[Haarnoja et~al.(2018)Haarnoja, Zhou, Abbeel, and
  Levine]{haarnoja2018soft}
Tuomas Haarnoja, Aurick Zhou, Pieter Abbeel, and Sergey Levine.
\newblock Soft actor-critic: Off-policy maximum entropy deep reinforcement
  learning with a stochastic actor.
\newblock In \emph{International Conference on Machine Learning}, pages
  1861--1870, 2018.

\bibitem[Harutyunyan et~al.(2019)Harutyunyan, Dabney, Mesnard, Gheshlaghi~Azar,
  Piot, Heess, van Hasselt, Wayne, Singh, Precup, and
  Munos]{harutyunyan2019hca}
Anna Harutyunyan, Will Dabney, Thomas Mesnard, Mohammad Gheshlaghi~Azar, Bilal
  Piot, Nicolas Heess, Hado~P van Hasselt, Gregory Wayne, Satinder Singh, Doina
  Precup, and Remi Munos.
\newblock Hindsight credit assignment.
\newblock In \emph{Advances in Neural Information Processing Systems 32}, pages
  12488--12497. 2019.

\bibitem[Jaderberg et~al.(2017)Jaderberg, Mnih, Czarnecki, Schaul, Leibo,
  Silver, and Kavukcuoglu]{jaderberg2016unreal}
Max Jaderberg, Volodymyr Mnih, Wojciech~Marian Czarnecki, Tom Schaul, Joel~Z
  Leibo, David Silver, and Koray Kavukcuoglu.
\newblock Reinforcement learning with unsupervised auxiliary tasks.
\newblock In \emph{International Conference on Learning Representations}, 2017.

\bibitem[Kapturowski et~al.(2019)Kapturowski, Ostrovski, Dabney, Quan, and
  Munos]{kapturowski2018r2d2}
Steven Kapturowski, Georg Ostrovski, Will Dabney, John Quan, and Remi Munos.
\newblock Recurrent experience replay in distributed reinforcement learning.
\newblock In \emph{International Conference on Learning Representations}, 2019.

\bibitem[Mnih et~al.(2015)Mnih, Kavukcuoglu, Silver, Rusu, Veness, Bellemare,
  Graves, Riedmiller, Fidjeland, Ostrovski, et~al.]{mnih2015dqn}
Volodymyr Mnih, Koray Kavukcuoglu, David Silver, Andrei~A Rusu, Joel Veness,
  Marc~G Bellemare, Alex Graves, Martin Riedmiller, Andreas~K Fidjeland, Georg
  Ostrovski, et~al.
\newblock Human-level control through deep reinforcement learning.
\newblock \emph{Nature}, 518\penalty0 (7540):\penalty0 529, 2015.

\bibitem[Oh et~al.(2017)Oh, Singh, and Lee]{junhyuk2017vpn}
Junhyuk Oh, Satinder Singh, and Honglak Lee.
\newblock Value prediction network.
\newblock In \emph{Advances in Neural Information Processing Systems 30}, pages
  6118--6128. 2017.

\bibitem[Pinto et~al.(2017)Pinto, Andrychowicz, Welinder, Zaremba, and
  Abbeel]{pinto2017asymmetric}
Lerrel Pinto, Marcin Andrychowicz, Peter Welinder, Wojciech Zaremba, and Pieter
  Abbeel.
\newblock Asymmetric actor critic for image-based robot learning.
\newblock In \emph{Robotics: Science and Systems (RSS)}, 2017.

\bibitem[Racani{\`e}re et~al.(2017)Racani{\`e}re, Weber, Reichert, Buesing,
  Guez, Rezende, Badia, Vinyals, Heess, Li, et~al.]{racaniere2017imagination}
S{\'e}bastien Racani{\`e}re, Th{\'e}ophane Weber, David Reichert, Lars Buesing,
  Arthur Guez, Danilo~Jimenez Rezende, Adria~Puigdomenech Badia, Oriol Vinyals,
  Nicolas Heess, Yujia Li, et~al.
\newblock Imagination-augmented agents for deep reinforcement learning.
\newblock In \emph{Advances in neural information processing systems}, pages
  5690--5701, 2017.

\bibitem[Schrittwieser et~al.(2019)Schrittwieser, Antonoglou, Hubert, Simonyan,
  Sifre, Schmitt, Guez, Lockhart, Hassabis, Graepel,
  et~al.]{schrittwieser2019muzero}
Julian Schrittwieser, Ioannis Antonoglou, Thomas Hubert, Karen Simonyan,
  Laurent Sifre, Simon Schmitt, Arthur Guez, Edward Lockhart, Demis Hassabis,
  Thore Graepel, et~al.
\newblock Mastering {A}tari, {G}o, {C}hess and {S}hogi by planning with a
  learned model.
\newblock \emph{arXiv preprint arXiv:1911.08265}, 2019.

\bibitem[Schulman et~al.(2017)Schulman, Wolski, Dhariwal, Radford, and
  Klimov]{schulman2017ppo}
John Schulman, Filip Wolski, Prafulla Dhariwal, Alec Radford, and Oleg Klimov.
\newblock Proximal policy optimization algorithms.
\newblock \emph{arXiv preprint arXiv:1707.06347}, 2017.

\bibitem[Silver et~al.(2017)Silver, van Hasselt, Hessel, Schaul, Guez, Harley,
  Dulac-Arnold, Reichert, Rabinowitz, Barreto, et~al.]{silver2017predictron}
David Silver, Hado van Hasselt, Matteo Hessel, Tom Schaul, Arthur Guez, Tim
  Harley, Gabriel Dulac-Arnold, David Reichert, Neil Rabinowitz, Andre Barreto,
  et~al.
\newblock The predictron: End-to-end learning and planning.
\newblock In \emph{International Conference on Machine Learning}, pages
  3191--3199, 2017.

\bibitem[Sutton and Barto(2018)]{sutton2011reinforcement}
Richard~S Sutton and Andrew~G Barto.
\newblock \emph{Reinforcement learning: An introduction}.
\newblock Cambridge, MA: MIT Press, 2018.

\bibitem[Sutton et~al.(2011)Sutton, Modayil, Delp, Degris, Pilarski, White, and
  Precup]{sutton2011horde}
Richard~S Sutton, Joseph Modayil, Michael Delp, Thomas Degris, Patrick~M
  Pilarski, Adam White, and Doina Precup.
\newblock Horde: A scalable real-time architecture for learning knowledge from
  unsupervised sensorimotor interaction.
\newblock In \emph{The 10th International Conference on Autonomous Agents and
  Multiagent Systems-Volume 2}, pages 761--768, 2011.

\bibitem[Talvitie(2014)]{talvitie2014model}
Erik Talvitie.
\newblock Model regularization for stable sample rollouts.
\newblock In \emph{Proceedings of the Thirtieth Conference on Uncertainty in
  Artificial Intelligence}, pages 780--789. AUAI Press, 2014.

\bibitem[Van~Hasselt et~al.(2016)Van~Hasselt, Guez, and
  Silver]{vanhasselt2016ddqn}
Hado Van~Hasselt, Arthur Guez, and David Silver.
\newblock Deep reinforcement learning with double {Q}-learning.
\newblock In \emph{Thirtieth AAAI conference on artificial intelligence}, 2016.

\bibitem[Vapnik and Izmailov(2015)]{vapnik2015learning}
Vladimir Vapnik and Rauf Izmailov.
\newblock Learning using privileged information: similarity control and
  knowledge transfer.
\newblock \emph{Journal of machine learning research}, 16\penalty0
  (2023-2049):\penalty0 2, 2015.

\bibitem[Veeriah et~al.(2019)Veeriah, Hessel, Xu, Rajendran, Lewis, Oh, van
  Hasselt, Silver, and Singh]{veeriah2019discovery}
Vivek Veeriah, Matteo Hessel, Zhongwen Xu, Janarthanan Rajendran, Richard~L
  Lewis, Junhyuk Oh, Hado~P van Hasselt, David Silver, and Satinder Singh.
\newblock Discovery of useful questions as auxiliary tasks.
\newblock In \emph{Advances in Neural Information Processing Systems}, pages
  9306--9317, 2019.

\bibitem[Weber et~al.(2019)Weber, Heess, Buesing, and Silver]{weber2019credit}
Th{\'e}ophane Weber, Nicolas Heess, Lars Buesing, and David Silver.
\newblock Credit assignment techniques in stochastic computation graphs.
\newblock In \emph{International Conference on Artificial Intelligence and
  Statistics}, pages 2650--2660, 2019.

\bibitem[Zhu et~al.(2018)Zhu, Wang, Merel, Rusu, Erez, Cabi, Tunyasuvunakool,
  Kram{\'{a}}r, Hadsell, de~Freitas, and Heess]{zhu2018diverse}
Yuke Zhu, Ziyu Wang, Josh Merel, Andrei~A. Rusu, Tom Erez, Serkan Cabi, Saran
  Tunyasuvunakool, J{\'{a}}nos Kram{\'{a}}r, Raia Hadsell, Nando de~Freitas,
  and Nicolas Heess.
\newblock Reinforcement and imitation learning for diverse visuomotor skills.
\newblock \emph{CoRR}, abs/1802.09564, 2018.

\end{thebibliography}
\bibliographystyle{plainnat}

\newpage
\appendix
\section{Appendix}

\subsection{Introduction Example}
\label{sec:appendix_introduction_example}

The argument in the introduction follows straightforwardly from a counting argument from the size of the probability tables involved in the discrete case.
We now describe a counting argument similar to the discrete probability factorization example in the introduction but that applies more directly to a value function scenario. Consider the following chain:
\begin{align}
(X, X') \rightarrow (X,Y') \rightarrow Z,
\end{align}
where $Z$ is to be interpreted as the expected return. Here the start state $(X,X')$ is sampled randomly but the rest of the chain has deterministic transitions, and $Y'$ is independent of $X$ given $X'$. Let $n$ be the number of possible values of $X$, and $m$ the number of possible values of $X'$, and suppose the number of possible values of $Y'$ is 2. In a tabular setting, learning the start state's value function model-free (i.e.\ mapping $(X,X')$ directly to $Z$) requires observing returns $Z$ for all $nm$ entries. In contrast, if we estimate the mappings $X' \rightarrow Y'$ and $(X,Y') \rightarrow Z$ separately, it requires $m + 2n$ entries, which is better than $nm$ (for $n,m > 4$). This shows that even in a policy evaluation scenario, the right model can more efficiently learn the value function.
The illustrative task in Sec.~\ref{sec:example} extends this to a function approximation setting but is similar in spirit.

\subsection{Analysis}

We restate the proposition from Section~\ref{sec:analysis} and prove it here.

\begin{prop} Suppose that $v^m_\vParam$ is sharing the same function $\psi$ as $v^+$ (i.e., $\psiParam=\causalPsiParam$), and let $\psi$ be linear ($\psi_\psiParam(f, \vphi) = \left(\begin{smallmatrix}\omega_1\\\omega_2\end{smallmatrix}\right)^\top \left(\begin{smallmatrix}f\\\vphi\end{smallmatrix}\right) + b$, where $\theta_1 = (\omega_1, \omega_2)$). Assume a squared loss for the model and the following relation between value losses: $\mathcal{L}(v^+) = C  \mathcal{L}(v)$ with $0<C<0.5$ (i.e., estimating the value with more information is an easier learning problem). Then, the following holds: $\mathcal{L}_\text{model}(\causalPhiParam) < \frac{(1-2C) \mathcal{L}(v)}{2 \|\omega_2 \|^2} \implies  \mathcal{L}(v^m) <  \mathcal{L}(v).$ (Proof in appendix.)
\end{prop}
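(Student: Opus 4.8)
The plan is to relate the value error of $v^m$ to that of $v^+$ via the modelling error, using the assumed linearity of $\psi$ and the shared parameters $\psiParam = \causalPsiParam$. Since $v^m(s) = \psi_\psiParam(f(s), \hat\vphi_\causalPhiParam(s))$ and $v^+(s) = \psi_\psiParam(f(s), \vphi(\tau^+))$ share the same linear map $\psi_\psiParam(f,\vphi) = \omega_1^\top f + \omega_2^\top \vphi + b$, the difference is simply $v^m(s) - v^+(s,\tau^+) = \omega_2^\top(\hat\vphi_\causalPhiParam(s) - \vphi(\tau^+))$. First I would write $v_\theta(s) - U = (v_\theta(s) - v^+(s,\tau^+)) + (v^+(s,\tau^+) - U)$ where here $v_\theta \equiv v^m$, take squared expectations, and apply the inequality $(a+b)^2 \le 2a^2 + 2b^2$ to get $\mathcal{L}(v^m) = \tfrac12\E[(v^m - U)^2] \le \E[(v^m - v^+)^2] + \E[(v^+ - U)^2] = \E[(\omega_2^\top(\hat\vphi - \vphi))^2] + 2\mathcal{L}(v^+)$.

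Next I would bound the cross term by Cauchy–Schwarz: $\E[(\omega_2^\top(\hat\vphi - \vphi))^2] \le \|\omega_2\|^2\, \E[\|\hat\vphi - \vphi\|^2] = 2\|\omega_2\|^2\, \mathcal{L}_\text{model}(\causalPhiParam)$, using that the model loss is the squared loss $\mathcal{L}_\text{model}(\causalPhiParam) = \tfrac12\E[\|\hat\vphi - \vphi\|^2]$. Combining with $\mathcal{L}(v^+) = C\,\mathcal{L}(v)$ gives $\mathcal{L}(v^m) \le 2\|\omega_2\|^2\,\mathcal{L}_\text{model}(\causalPhiParam) + 2C\,\mathcal{L}(v)$. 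Then the hypothesis $\mathcal{L}_\text{model}(\causalPhiParam) < \frac{(1-2C)\mathcal{L}(v)}{2\|\omega_2\|^2}$ yields $\mathcal{L}(v^m) < (1-2C)\mathcal{L}(v) + 2C\,\mathcal{L}(v) = \mathcal{L}(v)$, which is the claim. One should double-check the constant bookkeeping arising from the factor $\tfrac12$ in each loss definition; the $2a^2+2b^2$ split exactly cancels the $\tfrac12$ in $\mathcal{L}(v^m)$, so $\mathcal{L}(v^+)$ enters with coefficient $2$ rather than $1$, which is why $C < 0.5$ is exactly the threshold that makes the argument close.

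The main subtlety — rather than a deep obstacle — is making sure the expectations are over a consistent distribution: $v^m(s) - v^+(s,\tau^+)$ must be evaluated under the joint law of $(s, \tau^+)$, whereas $\mathcal{L}(v^m)$ is nominally an expectation over $s$ alone; this is harmless because $\E_s[(v^m(s)-U)^2]$ can be lifted to the joint distribution since $v^m(s)$ does not depend on $\tau^+$. A second point to be careful about is that the inequality chain produces an \emph{upper} bound on $\mathcal{L}(v^m)$, so the implication is one-directional (sufficient, not necessary), which matches the statement. If one wanted the additional hypothesis $0 < C$ to play a role, it is only needed to ensure the threshold $(1-2C)/(2\|\omega_2\|^2)$ is meaningful and that $\mathcal{L}(v^+)$ is a genuine fraction of $\mathcal{L}(v)$; the core argument only uses $C < 1/2$.
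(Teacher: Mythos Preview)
Your proposal is correct and follows essentially the same route as the paper's proof: use the linearity of $\psi$ to write $v^m - v^+ = \omega_2^\top(\hat\vphi - \vphi)$, apply Cauchy--Schwarz to bound its square by $\|\omega_2\|^2\|\hat\vphi - \vphi\|^2$, then use $(a+b)^2 \le 2a^2 + 2b^2$ on the decomposition $v^m - G = (v^m - v^+) + (v^+ - G)$ and plug in the assumptions. The only cosmetic difference is that the paper's proof defines the value error as $\mathcal{L}(v) = \E[(v(s)-G)^2]$ and the model loss without the $\tfrac12$ factor, whereas you carry $\tfrac12$ in both; since you apply the convention consistently, the constants balance and you land on the same final inequality.
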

\begin{proof}
We can derive the following relation for fixed values of the parameters:
\begin{align}
	\E[(v^m(s; \causalVParam)  - v^+(s, \tau^+; \vParam)^2 ]  &=  \E[ | \omega_2^\top (\vphi(\tau^+; \phiParam) - \hat\vphi(s; \causalPhiParam)) |^2 ] \\
	&\leq \E[  \| \omega_2 \|^2 \| \vphi(\tau^+) - \hat\vphi(s) \|^2 ]  \\
	&= \|\omega_2 \|^2  \mathcal{L}_\text{model}(\eta_2), \label{eq1}
\end{align}
using the Cauchy-Schwarz inequality.
Let  $\mathcal{L}$ define the value error for a particular value function $v$:  $\mathcal{L}(v) = \E[(v(s) - G)^2]$ and  $\mathcal{L}(v^+) = \E[(v^+(s, \tau^+) - G)^2]$. Then we have:
\begin{align}
	\mathcal{L}(v^m) &= \E[ (v^m(s)-v^+(s,\tau^+)+v^+(s,\tau^+)-G)^2] \\
	&\leq 2 (\|\omega_2 \|^2  \mathcal{L}_\text{model}(\eta_2)  +   \mathcal{L}(v^+)), \label{eq2}
\end{align}
using the fact that $\E[(X+Y)^2] \leq 2(E[X^2]+E[Y^2])$ for random variables $X$ and $Y$ and equation \ref{eq1}. Using the assumption  $\mathcal{L}(v^+) = C  \mathcal{L}(v)$ with $0<C<0.5$ and equation \ref{eq2}, the result follows:
\begin{align}
	\mathcal{L}_\text{model}(\causalPhiParam) < \frac{(1-2C) \mathcal{L}(v)}{2 \|\omega_2 \|^2} \implies  \mathcal{L}(v^m) <  \mathcal{L}(v).
\end{align}
\end{proof}

\subsection{Illustrative Example Details}
\label{sec:appendix_example}

We provide the precise definition of the illustrative tasks of Sec.~\ref{sec:example}. 
It consists of a value estimation problem in a 1-step Markov Reward Process (no actions), namely 
each episode consists of a single transition from initial state $s$ to terminal state $s'$, with a reward $r(s,s')$ on the way.
The agent is trained on multiple episodes of each MRP instance (the x-axis in Fig.~\ref{fig:cab_exp}-right). This learning process is repeated independently and averaged for multiple instances of the environment. 
Each instance of the task is parametrized by a square matrix $W$ and a vector $b$ sampled from a unit normal distribution, as well as randomly initialized MLP (see below), which together determine the uncontrolled MDP.
Initial states $s$ are of dimension $D$ and sampled from a multivariate unit normal distribution ($s_i \sim N(0, 1)$ for all state dimension~$i$).
Given $s=\left(\begin{smallmatrix}s_1\\s_2\end{smallmatrix}\right)$, where $s_1$ and $s_2$ are of dimension $D_1$ and $D_2$ ($D=D_1+D_2$),
the next state ${s' = \left(\begin{smallmatrix}s_1'\\s_2'\end{smallmatrix}\right)}$ is determined according to the transition function: $s_1'=\text{MLP}(s) + \epsilon$ and $s_2' = \sigma(W s_2 + b)$ where $\sigma$ is the Heaviside function, and MLP is a randomly sampled Multi-Layer Perceptron. $s_1'$ acts as a distractor here, with additive noise $\epsilon \sim N(0,1)$. The reward obtained is $r(s,s') = \sum_i s^{(i)}_1 \sum_i s'^{(i)}_2 / \sqrt{D}$. The true value in the start state is also $v(s) = r(s,s')$.\footnote{Note that $r$ only uses the part of $s'$ which is obtained deterministically from $s$.}

Fig.~\ref{fig:cab_exp}-left shows some trajectories for a low-dimensional version of the problem ($D=4$) and the middle plot displays $s_2'$ by color-coding it according to $\sum_i s'^{(i)}_2$.
The results in Fig.~\ref{fig:cab_exp}-right use networks where each subnetwork is a small 1-hidden-layer MLP with 16 hidden units and ReLu activation functions, and $\vphi$ has dimension 3. The dimension of the data is $D=32$ for this experiment, with the dimension of the useful data in the next state $D_2=4$.

\subsection{General Architecture Details}

To compute $v^+$ and train $\hat\phi$ in an online fashion, we process fixed-length unrolls of the state-RNN and compute the hindsight value and corresponding updates at time $t$ if $t+k$ is also within that same unroll. Also, we update $v^+$ at a slower rate (i.e., $\alpha < \beta$) to give enough time for the model $\hat\vphi$ to adapt to the changing hindsight features $\vphi$. In our experiments we found that even a low-dimensional $\vphi$ (in the order of $d=3$) and a relatively short hindsight horizon $k$ (in the order of 5) are sufficient to yield significant performance boosts, whilst keeping the extra model computational costs modest.

For most experiments described in the paper, the model loss $\mathcal{L}_\text{model}(\causalPhiParam)$ is the following:
\begin{align}
    \mathcal{L}_\text{model}(\causalPhiParam) = \E_{s, \tau^+}[ \|  \vphi_\phiParam(\tau^+) - \hat{\vphi}_\causalPhiParam(s) \|_2^2 ]
\end{align}
where the expectation is taken over the distribution of states and partial trajectories $\tau^+$ resulting from that state.
Note that these trajectories $\tau^+$ may be obtained off-policy in some of the experiments (due to the distributed nature of the algorithm, the behavior policy is not fully synchronised with the evaluation policy), and we do not explicitly correct for this effect for simplicity.

\subsection{Portal Choice}

\paragraph{Environment}

The observation is a $7\times23$ RGB frame (see Fig.~\ref{fig:teleporter}). There are 3 possible spawning points for the agent in the center and 42 possible portal positions (half of which lead to the green room, the other half leading to the red room). At the start of an episode, two portals, each leading to a different room, are chosen are random. They are both displayed as cyan pixels.
Included in the observation in the first phase is the context, a random permutation in a $5\times5$ grid of $N$ pixels, where is uniformly sampled at the start of each episode: $N \sim \mathcal{U}\{1, 10\}$. A fixed map $f: \{1,\dots,10\} \rightarrow \{0,1\}$ determines which contexts are rewarding with the green room, the rest being rewarding with the red room.
The reward when reaching the goal is determined according to:
\begin{align}
    R = 2 (f(N) G + (1-f(N))(1-G)),
\end{align}
where $G \in \{0,1\}$ is whether the reached room is green.

Note that the important decision in this task does not require any memory: everything is observed in the portal room to select the portal, yet there is a memory demand when in the reward room. We ran an extra control experiment where we gave $h_{t-k}$ as an additional input to the policy and value for the baseline IMPALA agent and it did not perform better than what is reported in Fig.~\ref{fig:teleporter_result_a} for the actor-critic baseline.

\paragraph{Network architecture}

The policy and value network takes in the observation and passes it to a ConvNet encoder (with filter channels [32, 32, 32], kernel shapes [4, 3, 3] applied with strides [2, 1, 1]) before being passed to a ConvLSTM network with 32 channels and 3x3 filters. The output of the ConvLSTM is the internal state $h$. The $\hat\vphi$ network is a ConvNet with [32, 32, 32, 1] filter channels with kernels of size 3 except for a final 1x1 filter, whose output is flatten and passed to an MLP with 256 hidden units with ReLu activation, before a linear layer with dimension $d=3$. The $\vphi$ network is a similarly configured network with one less convolution layer and 128 hidden units in the MLP. The $\psi_\causalVParam$ network is an MLP with 256 hidden units followed by a linear layer that takes $h$ and $\hat\vphi$ as input and outputs the policy $\pi^m$ and the value $v^m$. $v^+$ is obtained similarly with a similar MLP that has a single scalar output.
We used a future observation window of $k=5$ steps in this domain and loss weights $\alpha=0.25$, $\beta=0.5$. Unroll length was 20, and $\gamma=0.99$. Optimization was done with the Adam optimizer (learning rate of $5e^{-4}$), with batch size 32. 
The model-free baseline is obtained by using the same code and network architecture, and setting the modeling loss and hindsight value loss to 0 ($\alpha=\beta=0)$.

For the portal task, we found it better to employ a cross-entropy loss for the model loss:
\begin{align}
    \mathcal{L}_\text{model}(\causalPhiParam) = \E_{s, \tau^+}[
    H(p(\vphi_\phiParam(\tau^+)), \hat{p}(\hat{\vphi}_\causalPhiParam(s))
     ]
\end{align}
where $p^{(i)} \propto e^{\vphi_\phiParam(\tau^+)^{(i)}}$ and $\hat{p}^{(i)} \propto e^{\hat{\vphi}_\causalPhiParam(s)^{(i)}}$ are the softmax distributions when interpreting $\vphi$ and $\hat{\vphi}$ as the vector of logits.

\subsection{Atari}

Hyper-parameters and infrastructure are the same as reported in \citep{kapturowski2018r2d2}, with deviations as listed in table \ref{table.hyperparams}. For our value target, we also average different $n$-step returns with exponential averaging as in $Q(\lambda)$ (with the return being truncated at the end of unrolls).
The $Q$ network is composed of a convolution network (cf. Vision ConvNet in table) which is followed by an LSTM with 512 hidden units. What we refer to in the main text as the internal state $h$ is the output of the LSTM. The $\vphi$ and $\hat\vphi$ networks are MLPs with a single hidden layer of 256 units and ReLu activation function, followed by a linear which outputs a vector of dimension $d$. The $\psi_\psiParam$ function concatenates $h$ and $\vphi$ as inputs to an MLP with 256 hidden units with ReLu activation function, followed by a linear which outputs $q^+$ (a vector of dimension 18, the size of the Atari action set). $q^m$ is obtained by passing $h$ and $\hat\vphi$ to a dueling network as described by \citep{kapturowski2018r2d2}. 

Other HiMo parameters are described in table \ref{table.atari_hindsight_params}. The R2D2 baseline with the same capacity is obtained by running the same architecture with $\alpha=\beta=0$.

\begin{table*}[!th]
\centering
\caption{Hyper-parameter values used for our R2D2 implementation.} \label{table.hyperparams}
\vspace{1mm}
\begin{tabular}{c|c}
\toprule
Number of actors &  320\\ 
\midrule
Sequence length & 80 (+ prefix of l = 20 in burn-in experiments) \\
\midrule 
Learning rate & $2e^{-4}$  \\
Adam optimizer $\beta_1$ & 0.9  \\
Adam optimizer $\beta_2$ & 0.999  \\
\midrule
$\lambda$ & 0.7 \\
Target update interval & 400  \\
Value function rescaling & $g(x) = \text{sign}(x)\left( \sqrt{\|x\| +1}-1\right) + \eps x, \text{  } \eps=10^{-3}$ \\
\midrule
Frame pre-processing & None (full res. including no frame stacking) \\ 
Vision ConvNet filters sizes & [7, 5, 5, 3] \\
Vision ConvNet filters strides & [4, 2, 2, 1] \\
Vision ConvNet filters channels & [32, 64, 128, 128] \\
\bottomrule
\end{tabular}
\end{table*}

\begin{table}[!th]
\centering
\caption{Hindsight modelling parameters for Atari}
\label{table.atari_hindsight_params}
\vspace{1mm}
\begin{tabular}{c|c}
\toprule
$\alpha$ &  0.01\\ 
$\beta$ & 1.0\\
$k$ & 5 \\
$d$ & 3 \\
\bottomrule
\end{tabular}
\end{table}

\begin{table}[!th]
\small
\centering
\caption{Score details per game in Atari. Results for 200k updates, as summarized in Table 1.}
\label{table.atari_all}
\begin{tabular}{lrrrr}
\toprule
         Level name &                 HiMo (+R2D2) &             R2D2 (our run) &                Human &               Random \\
\midrule
             alien &            95,450.38 &            92,532.38 &             7,127.70 &               227.80 \\
            amidar &            23,547.32 &            22,754.29 &             1,719.50 &                 5.80 \\
          assault &            83,560.22 &            86,476.19 &               742.00 &               222.40 \\
          asterix &           988,988.10 &           997,833.33 &             8,503.30 &               210.00 \\
         asteroids &           170,831.05 &           142,393.95 &            47,388.70 &               719.10 \\
          atlantis &         1,420,511.90 &         1,406,197.62 &            29,028.10 &            12,850.00 \\
        bank heist &            13,659.95 &             2,869.57 &               753.10 &                14.20 \\
      battle zone &           641,233.33 &           633,438.10 &            37,187.50 &             2,360.00 \\
        beam rider &           160,192.84 &           174,640.44 &            16,926.50 &               363.90 \\
          berzerk &            19,161.86 &             9,026.14 &             2,630.40 &               123.70 \\
          bowling &               205.04 &               158.33 &               160.70 &                23.10 \\
            boxing &               100.00 &               100.00 &                12.10 &                 0.10 \\
          breakout &               777.57 &               768.95 &                30.50 &                 1.70 \\
         centipede &           523,788.61 &           530,994.40 &            12,017.00 &             2,090.90 \\
  chopper command &           967,389.05 &           999,875.24 &             7,387.80 &               811.00 \\
     crazy climber &           257,869.05 &           273,993.81 &            35,829.40 &            10,780.50 \\
          defender &           478,032.38 &           485,789.05 &            18,688.90 &             2,874.50 \\
      demon attack &           143,439.98 &           142,584.74 &             1,971.00 &               152.10 \\
      double dunk &                24.00 &                24.00 &               -16.40 &               -18.60 \\
            enduro &             2,367.29 &             2,365.93 &               860.50 &                 0.00 \\
     fishing derby &                73.45 &                67.36 &               -38.70 &               -91.70 \\
          freeway &                32.97 &                32.96 &                29.60 &                 0.00 \\
         frostbite &           119,402.71 &             9,669.95 &             4,334.70 &                65.20 \\
            gopher &           114,422.95 &           110,841.14 &             2,412.50 &               257.60 \\
          gravitar &             6,759.05 &             6,826.19 &             3,351.40 &               173.00 \\
              hero &            27,729.79 &            22,108.21 &            30,826.40 &             1,027.00 \\
        ice hockey &                45.19 &                30.04 &                 0.90 &               -11.20 \\
         jamesbond &            13,305.00 &            12,501.79 &               302.80 &                29.00 \\
          kangaroo &            14,430.48 &            13,475.11 &             3,035.00 &                52.00 \\
             krull &           104,218.76 &            96,904.48 &             2,665.50 &             1,598.00 \\
    kung-fu master &           673,359.79 &           210,872.38 &            22,736.30 &               258.50 \\
 montezuma revenge &               133.33 &               266.67 &             4,753.30 &                 0.00 \\
         ms pacman &            29,259.33 &            26,924.95 &             6,951.60 &               307.30 \\
    name this game &            38,484.52 &            39,106.19 &             8,049.00 &             2,292.30 \\
          phoenix &           733,645.10 &           799,754.44 &             7,242.60 &               761.40 \\
          pitfall &                 0.00 &                 0.00 &             6,463.70 &              -229.40 \\
              pong &                20.99 &                21.00 &                14.60 &               -20.70 \\
      private eye &            10,138.16 &            16,690.32 &            69,571.30 &                24.90 \\
             qbert &           107,484.88 &            80,638.69 &            13,455.00 &               163.90 \\
         riverraid &            36,276.24 &            36,627.05 &            17,118.00 &             1,338.50 \\
      road runner &           545,206.67 &           552,289.52 &             7,845.00 &                11.50 \\
          robotank &                81.78 &                81.01 &                11.90 &                 2.20 \\
          seaquest &           999,403.80 &           999,934.59 &            42,054.70 &                68.40 \\
            skiing &           -25,551.48 &           -29,694.19 &            -4,336.90 &           -17,098.10 \\
          solaris &             3,902.95 &             4,342.38 &            12,326.70 &             1,236.30 \\
    space invaders &            56,694.81 &            36,046.48 &             1,668.70 &               148.00 \\
      star gunner &           357,928.57 &           292,842.38 &            10,250.00 &               664.00 \\
          surround &                 9.68 &                 9.93 &                 6.50 &               -10.00 \\
            tennis &                15.79 &                 5.19 &                -8.30 &               -23.80 \\
        time pilot &           264,966.19 &           240,458.10 &             5,229.20 &             3,568.00 \\
         tutankham &               341.03 &               318.57 &               167.60 &                11.40 \\
         up n down &           469,491.38 &           475,690.81 &            11,693.20 &               533.40 \\
          venture &             1,981.43 &             1,966.19 &             1,187.50 &                 0.00 \\
     video pinball &           660,922.63 &           721,413.80 &            17,667.90 &                 0.00 \\
     wizard of wor &            97,959.05 &            99,697.14 &             4,756.50 &               563.50 \\
      yars revenge &           494,489.72 &           402,853.37 &            54,576.90 &             3,092.90 \\
            zaxxon &            87,735.24 &            89,230.95 &             9,173.30 &                32.50 \\
\bottomrule
\end{tabular}
\end{table}

\end{document}